\newtheorem{proposition}{Proposition}
\theoremstyle{remark}
\newtheorem{remark}{Remark}
\DeclareMathOperator*{\argmin}{argmin}
\DeclareMathOperator*{\argmax}{argmax}
\DeclareMathOperator{\prox}{prox}
\newcommand{\pierre}[1]{\textcolor{cyan}{{\bf !Pierre:} #1 {\bf :erreiP!}}}
\def\Cin{C_{\text{in}}}
\def\Cout{C_{\text{out}}}
\def\Cglob{C_{\text{glob}}}
\def\Mmax{M_{\text{max}}}
\def\bfn{\boldsymbol{n}}
\newenvironment{equationsize*}[1]{%
  \skip@=\baselineskip 
  #1%
  \baselineskip=\skip@ 
  \equation
}{\nonumber\endequation \ignorespacesafterend} 
\newenvironment{alignsize*}[1]{%
  \skip@=\baselineskip 
  #1%
  \baselineskip=\skip@ 
  \start@align\@ne\st@rredtrue\m@ne
}{\endalign\ignorespacesafterend} 
\begin{document}
\lhead{P.Machart, S.Anthoine, L.Baldassarre} 
\rhead{Optimal Computational Trade-Off of Inexact Proximal Methods}
\rfoot[Technical Report V 1.0]{\thepage} 
\cfoot{} 
\lfoot[\thepage]{Technical Report V 2.0}

\renewcommand{\headrulewidth}{0.4pt}  
\renewcommand{\footrulewidth}{0.4pt}

\title{Optimal Computational Trade-Off of Inexact Proximal Methods}

\author{
Pierre Machart\\
LIF, LSIS, CNRS\\
Aix-Marseille University\\
\texttt{pierre.machart@lif.univ-mrs.fr} \\
\and
Sandrine Anthoine \\
LATP, CNRS, Aix-Marseille University \\
\texttt{anthoine@cmi.univ-mrs.fr} \\
\and
Luca Baldassarre \\
LIONS, \'Ecole Polytechnique F\'ed\'erale de Lausanne \\
\texttt{luca.baldassarre@epfl.ch} \\
}

\maketitle

\section*{Abstract}
In this paper, we investigate the trade-off between convergence rate and computational cost when minimizing a composite functional
 with proximal-gradient methods, which are popular optimisation tools in machine learning.
We consider the case when the \emph{proximity operator} is computed via an iterative procedure, which provides an approximation
of the exact proximity operator. In that case, we obtain algorithms with two nested loops.
We show that the strategy that minimizes the computational cost
 to reach a solution with a desired accuracy in finite time is to set the number of inner iterations to a constant,
 which differs from the strategy indicated by a convergence rate analysis.
In the process, we also present a new procedure called SIP (that is Speedy Inexact Proximal-gradient algorithm)
 that is both computationally efficient and easy to implement.
Our numerical experiments confirm the theoretical findings and suggest that SIP can be a very competitive alternative to the standard procedure.

\section{Introduction}


Recent advances in machine learning and signal processing have led to more involved optimisation problems,
 while abundance of data calls for more efficient optimization algorithms.
First-order methods are now extensively employed to tackle these issues and, among them,
 proximal-gradient algorithms~\cite{combettes2005signal,Nesterov07, Beck09} are becoming increasingly popular. 
They make it possible to solve very general convex non-smooth problems of the following form:
\begin{align}
\label{compprob}
 \min_x f(x) := g(x) + h(x),
\end{align}
where $g: \mathbb{R}^n \to \mathbb{R}$ is convex and smooth with an $L$-Lipschitz continuous gradient and $h: \mathbb{R}^n \to \mathbb{R}$
is lower semi-continuous proper convex, 
with remarkably simple, while effective, iterative algorithms
 which are guaranteed \cite{Beck09} to achieve the optimal convergence rate of $O(1/k^2)$, for a first order method, in the sense of \cite{nemirovsky1983}. 
They have been applied to a wide range of problems, from supervised learning with sparsity-inducing
 norm~\cite{bach2011convex,chen2011smoothing, baldassarre2012general, mosci2010solving},
 imaging problems~\cite{chambolle2004algorithm, beck2009tv, fadili2011total},
 matrix completion~\cite{cai2010singular, lin2009fast}, sparse coding~\cite{jenatton2011proximal} and multi-task learning~\cite{chen2009accelerated}.

The heart of these procedures is the \emph{proximity operator}.
In the favorable cases, analytical forms exist. However, there are many problems,
 such as Total Variation (TV) denoising and deblurring~\cite{chambolle2011first},
 non-linear variable selection~\cite{mosci2010solving}, structured sparsity~\cite{jenatton2011proximal,baldassarre2012general},
 trace norm minimisation \cite{cai2010singular, lin2009fast}, matrix factorisation problems such as
 the one described in~\cite{Schmidt11}, where the proximity operator can only be computed numerically, giving rise 
to what can be referred to as \emph{inexact proximal-gradient} algorithms~\cite{Schmidt11,villa2011aifobos}.

\begin{figure}[!b]
\centering
\includegraphics[width=0.6\textwidth]{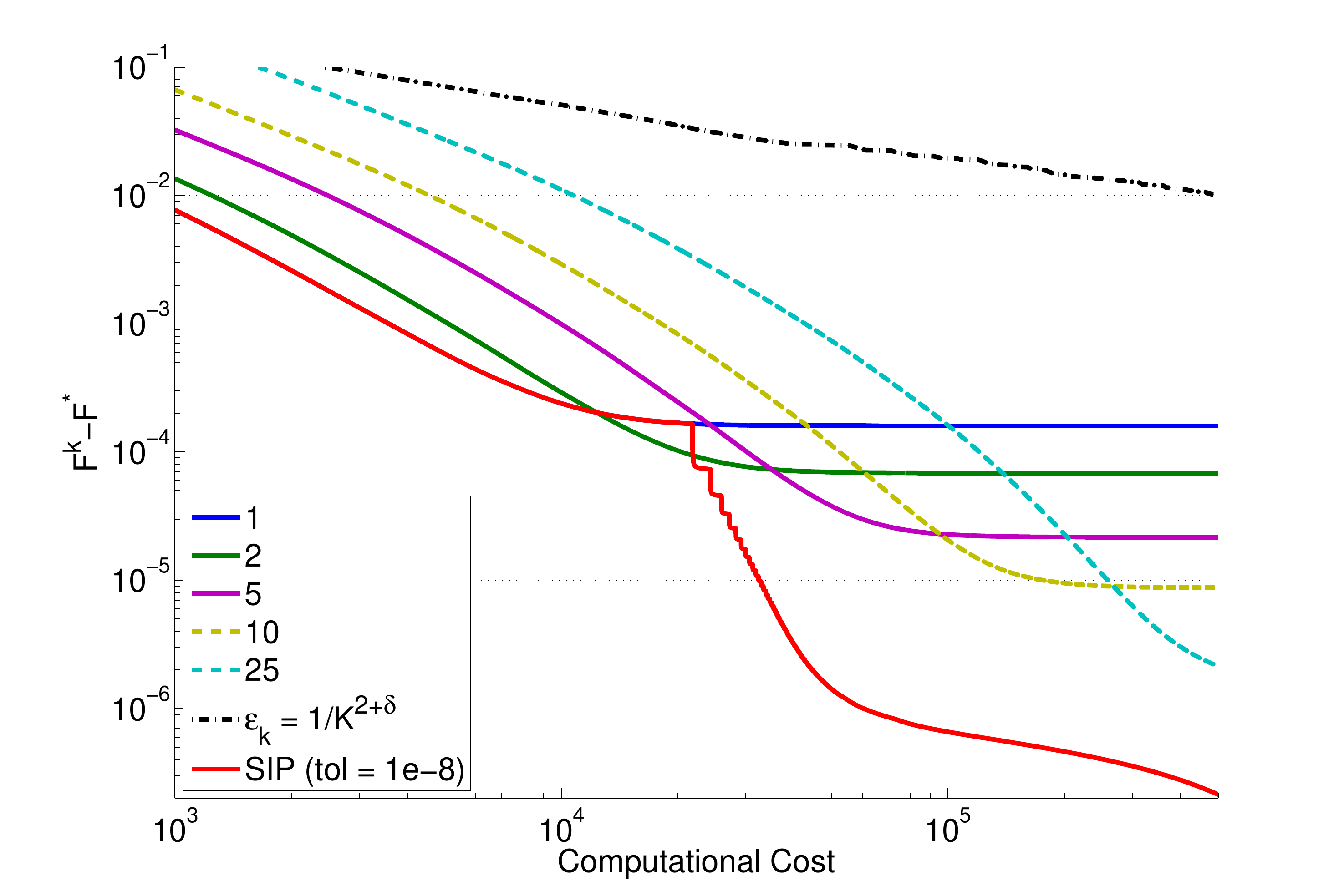} 
\caption{TV-regularization : Computational cost vs. objective value for different strategies}
\label{fig:intro}
\end{figure}

Both theory and experiments show that the precision of those numerical approximations 
has a fundamental impact on the performance of the algorithm.
A simple simulation, experimenting different strategies for setting this
 precision, on a classical Total Variation image deblurring problem (see Section \ref{proxtrad:expe} for more details)
 highlights two aspects of this impact.
Fig.~\ref{fig:intro} depicts the evolution of the objective value (hence precision)
 versus the computational cost (i.e. running time, see Section \ref{proxtrad:pb}
 for a more formal definition). The different curves are obtained by solving the exact same problem
of the form (\ref{compprob}), using, along the optimization process, either a constant
 precision (for different constant values) for the computation of the proximity operator, 
or an increasing precision (that is computing the proximity operator more and more precisely along the process).
 It shows that the computation cost required to reach a fixed value of the objective value 
varies greatly between the different curves (i.e. strategies).
That means that the computational performance of the procedure will dramatically depend on the chosen strategy.
Moreover, we can see that the curves do not reach the same plateaus, meaning that
the different strategies cause the algorithm to converge to different solutions, with different precisions.

Meanwhile, designing learning procedures which ensure good generalization properties is a central and recurring problem in Machine Learning.
When dealing with \emph{small-scale} problems, this issue is mainly covered by the traditional trade-off between \emph{approximation}
(i.e. considering the use of predictors that are complex enough to handle sophisticated prediction tasks) and \emph{estimation} (i.e. considering
 the use of predictors that are simple enough not to overfit on the training data).
However, when dealing with \emph{large-scale} problems, the amount of available data can make it impossible to precisely solve 
for 
the optimal trade-off between approximation and estimation.
Building on that observation, \cite{Bottou07} has highlighted that \emph{optimization} should be taken into account as a third crucial component,
in addition to approximation and estimation, leading to more complex (multiple) trade-offs.

In fact, dealing with the aforementioned multiple trade-off in a finite amount of computation time urges machine learners
 to consider solving problems with a lower precision and  pay closer attention to the  computational cost of 
the optimization procedures.
This crucial point motivates the study of strategies that lead to an approximate solution, at a smaller computational cost,
as the figure depicts.
However, the choice of the strategy that determines the precision of the numerical approximations
 seems to be often overlooked in practice.
Yet, in the light of what we have discussed in that introduction, we think it is pivotal.
In several studies, the precision is set so that the global algorithm 
converges to an optimum of the functional \cite{chaux09}, by studying sufficient conditions for such a convergence.
In many others, it is only considered as a mere implementation 
detail. A quick review of the literature shows that many application-centered papers
seem to neglect this aspect and fail at providing any detail regarding this point (e.g. \cite{anthoine12}).

Recently, some papers have addressed this question from a more theoretical point of view.
 For instance, \cite{Schmidt11,villa2011aifobos} give conditions on the approximations of the proximity
 operator so that the optimal convergence rate is still guaranteed.
However, rate analysis is not concerned by the complexity of computing the proximity operator.
As a consequence, the algorithms yielding the fastest rates of convergence are not necessarily the
computationally lightest ones, hence not the ones yielding the shortest computation time. 
In fact, no attempts have yet been made to assess the global computational cost of those inexact proximal-gradient algorithms.
 It is worth mentioning that for some specific cases, other types of proximal-gradient algorithms have
 been proposed that allow to avoid computing complex proximity operator \cite{loris2011generalization, chambolle2011first}.

In Section \ref{proxtrad:setting}, we start from the results in~\cite{Schmidt11} that link the overall accuracy
of the iterates of inexact proximal-gradient methods with
 the errors in the approximations of the proximity operator. 
We consider iterative methods for computing the proximity operator and, in Section \ref{proxtrad:pb}, we show that if one is
 interested in minimizing the computational cost (defined in Section~\ref{subsec:tradeoff})
 for achieving a desired accuracy, other strategies than the ones proposed in ~\cite{Schmidt11}
 and~\cite{villa2011aifobos} might lead to significant computational savings.

The main contribution of our work is showing, in Section~\ref{proxtrad:res}, that for both accelerated and non-accelerated proximal-gradient methods,
 the strategy minimizing the global cost to achieve a desired accuracy is to keep the number of internal iterations constant.
 This constant depends on the desired accuracy and the convergence rate of the algorithm used to compute the proximity operator.
Coincidentally, those theoretical strategies meet those of actual implementations and widely-used packages
 and help us understand both their efficiency and limitations.
After a discussion on the applicability of those strategies, we also propose a more practical one,
 namely the Speedy Inexact Proximal-gradient
 (SIP) strategy, motivated by our analysis.

In Section \ref{proxtrad:expe}, we numerically assess different strategies (i.e. constant numbers of inner iterations, SIP, 
the strategy yielding optimal convergence rates)
 on two problems, illustrating the theoretical analysis and suggesting that our new strategy SIP can be very effective.
This leads to a final discussion about the relevance and potential limits of our approach along with some hints
 on how to overcome them.

\section{Setting}
\label{proxtrad:setting}
\subsection{Inexact Proximal Methods}

To solve problem (\ref{compprob}), one may use the so-called \emph{proximal}-gradient methods~\cite{Nesterov07}.
Those iterative methods consist in generating a sequence $\{x_k\}$, where
$$x_k = \prox_{h/L} \left[y_{k-1} - \tfrac{1}{L} \nabla g(y_{k-1})\right],$$
$$\text{with } y_k = x_k + \beta_k (x_k - x_{k-1}),$$
and the choice of $\beta_k$ gives rise to two schemes: $\beta_k = 0$ for the \emph{basic scheme},
or some well-chosen sequence (see \cite{Nesterov07,Tseng08,Beck09} for instance) for an \emph{accelerated scheme}.
The {\em proximity operator} $\prox_{h/L}$ is defined as:
\begin{equation}
\label{defprox}
 \prox_{h/L}(z) = \argmin_x \tfrac{L}{2} \|x-z\|^2 + h(x).
\end{equation}

In the most classical setting, the proximity operator is computed exactly. 
The sequence $\{x_k\}$ then converges to the solution of problem (\ref{compprob}).
However, in many situations 
no closed-form solution of (\ref{defprox}) is known and one can only provide an approximation of the proximal point.
From now on, let us denote by $\epsilon_k$ an upper bound on the error induced in the proximal
 objective function by this approximation, at the $k$-th iteration:
\begin{align}
\label{proxerr}
\frac{L}{2} \|x_k-z\|^2 + h(x_k) \leq \epsilon_k + \min_x \left\{\frac{L}{2} \|x-z\|^2 + h(x)\right\}.
\end{align}

For the basic scheme, the convergence of $\{x_k\}$ to the optimum of
 Problem (\ref{compprob}) has been studied in~\cite{combettes2005signal} and
is ensured under fairly mild conditions on the sequence $\{\epsilon_k\}$.

\subsection{Convergence Rates}

The authors of~\cite{Schmidt11} go beyond the study on the convergence of inexact proximal methods: they establish their rates of convergence.
(This is actually done in the more general case where the gradient of $g$ is also approximated. In the present study, we restrict ourselves to error in the proximal part.)

Let us denote by $x^*$ the solution of problem (\ref{compprob}). The convergence rates of the basic (non-accelerated) proximal method (e.g. $y_k = x_k$) thus reads:
\begin{proposition}[Basic proximal-gradient method (Proposition 1 in~\cite{Schmidt11})]
 For all $k \geq 1$,
\begin{align}
\label{convrateconv}
 f(x_k) - f(x^*) \leq \frac{L}{2k}\left(\|x_0 - x^*\| 
+ 2 \sum_{i=1}^k \sqrt{\frac{2 \epsilon_i}{L}} + \sqrt{\sum_{i=1}^k \frac{2 \epsilon_i}{L}}\right)^2.
\end{align}
\end{proposition}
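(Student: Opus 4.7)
The plan is to follow the now-standard route for analysing inexact proximal-gradient schemes, adapted from exact-case arguments: first reformulate the approximate proximal step as an exact one on a perturbed problem, then exploit convexity and the descent lemma to obtain a one-step inequality, and finally close a recursion on $\|x_k-x^*\|$ using a Gronwall-type lemma.

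First I would give an analytic handle on the approximation condition (\ref{proxerr}). Because $\frac{L}{2}\|x-z\|^2+h(x)$ is $L$-strongly convex with unique minimiser $\prox_{h/L}(z)$, an $\epsilon_k$-optimal point $x_k$ satisfies $\|x_k-\prox_{h/L}(z_{k-1})\|\le\sqrt{2\epsilon_k/L}$, where $z_{k-1}=y_{k-1}-\tfrac1L\nabla g(y_{k-1})$. Equivalently, by the $\epsilon$-subdifferential calculus used in Schmidt et al., there exists $e_k$ in the $\epsilon_k$-subdifferential of $h$ at $x_k$ such that $L(z_{k-1}-x_k)=e_k$, from which one derives the key inequality
\begin{equation*}
h(x_k)\le h(x)+L\langle z_{k-1}-x_k,x_k-x\rangle+\epsilon_k\qquad\forall x.
\end{equation*}

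Second, I would combine this with the descent lemma applied to $g$ at $y_{k-1}=x_{k-1}$ (basic scheme),
\begin{equation*}
g(x_k)\le g(x_{k-1})+\langle\nabla g(x_{k-1}),x_k-x_{k-1}\rangle+\tfrac{L}{2}\|x_k-x_{k-1}\|^2,
\end{equation*}
and with convexity of $g$ at $x^*$. Adding these, choosing $x=x^*$ in the inequality for $h$, and regrouping the cross terms as in the exact case yields, after some algebra, a one-step recursion of the form
\begin{equation*}
\tfrac{2}{L}\bigl(f(x_k)-f(x^*)\bigr)\le \|x_{k-1}-x^*\|^2-\|x_k-x^*\|^2+\tfrac{2\epsilon_k}{L}+2\sqrt{\tfrac{2\epsilon_k}{L}}\,\|x_k-x^*\|.
\end{equation*}

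Third, I would telescope from $i=1$ to $k$. Summing and using the monotonicity of $f(x_i)-f(x^*)$ (which would give the extra factor $k$), one obtains
\begin{equation*}
\tfrac{2k}{L}\bigl(f(x_k)-f(x^*)\bigr)\le \|x_0-x^*\|^2+\sum_{i=1}^k\tfrac{2\epsilon_i}{L}+2\sum_{i=1}^k\sqrt{\tfrac{2\epsilon_i}{L}}\,\|x_i-x^*\|.
\end{equation*}
The right-hand side still involves the unknown iterates through $\|x_i-x^*\|$, so the last step is to control $u_k:=\|x_k-x^*\|$ itself. Going back to the one-step recursion and dropping the nonnegative $f(x_k)-f(x^*)$ term gives $u_k^2\le u_{k-1}^2+\tfrac{2\epsilon_k}{L}+2\sqrt{\tfrac{2\epsilon_k}{L}}\,u_k$, which unfolds into $u_k^2\le A_k+\sum_{i=1}^k 2\sqrt{\tfrac{2\epsilon_i}{L}}\,u_i$ with $A_k=\|x_0-x^*\|^2+\sum_i\tfrac{2\epsilon_i}{L}$.

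The main obstacle, and the place where the specific form of the bound is shaped, is this last recursion: one needs the Schmidt--Le Roux--Bach lemma (a discrete Gronwall inequality) stating that $u_k^2\le S_k+\sum_i\lambda_i u_i$ implies $u_k\le \tfrac12\sum_i\lambda_i+\bigl(S_k+(\tfrac12\sum_i\lambda_i)^2\bigr)^{1/2}$. Applying it with $\lambda_i=2\sqrt{2\epsilon_i/L}$ bounds each $\|x_i-x^*\|$ by $\|x_0-x^*\|+2\sum_i\sqrt{2\epsilon_i/L}$ (up to the $\sqrt{\sum 2\epsilon_i/L}$ correction). Plugging this back into the telescoped inequality and completing the square in $\|x_0-x^*\|+2\sum_i\sqrt{2\epsilon_i/L}+\sqrt{\sum_i 2\epsilon_i/L}$ yields exactly (\ref{convrateconv}). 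The delicate point, beyond the bookkeeping, is verifying that the $\epsilon$-subdifferential characterisation indeed holds under the primal-gap condition (\ref{proxerr}) rather than the stronger dual-gap assumption, which is what allows the approach to apply to any numerical solver for the inner problem.
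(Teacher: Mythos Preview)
The paper does not prove this proposition: it is quoted verbatim as ``Proposition~1 in~\cite{Schmidt11}'' and used as an input to the subsequent analysis. So there is no paper-side proof to compare against, only the original argument of Schmidt, Le~Roux and Bach, whose outline you have essentially reproduced ($\epsilon$-subdifferential reformulation of the inexact proximal step, one-step inequality from the descent lemma, telescoping, and the discrete Gronwall lemma to close the recursion on $\|x_i-x^*\|$).

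There is, however, a genuine gap in your third step. You write that you would sum the one-step inequalities and then use ``the monotonicity of $f(x_i)-f(x^*)$'' to replace $\sum_{i=1}^k\bigl(f(x_i)-f(x^*)\bigr)$ by $k\bigl(f(x_k)-f(x^*)\bigr)$. Monotonicity of the objective along the iterates is \emph{not} available for the inexact basic proximal-gradient method: with errors in the proximal step, $f(x_k)$ can exceed $f(x_{k-1})$. The paper itself flags exactly this point in the Remark following Proposition~1: the bound in~\cite{Schmidt11} is actually established for the averaged iterate $f\bigl(\tfrac1k\sum_{i=1}^k x_i\bigr)$, via convexity of $f$, and the version stated for $f(x_k)$ only follows either for the iterate of lowest function value or under an additional monotone-objective assumption. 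Your argument therefore proves the averaged-iterate bound (replace your monotonicity step by Jensen's inequality $f(\bar x_k)\le\tfrac1k\sum_i f(x_i)$), but not the pointwise bound on $f(x_k)$ as you claim. Apart from this, the sketch is faithful to the original proof.
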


\begin{remark}
 In~\cite{Schmidt11}, this bound actually holds on the average of the iterates $x_i$, i.e.
 $$f\left(\frac{1}{k}\sum_{i=1}^k x_i\right) - f(x^*)\leq \frac{L}{2k}\left(\|x_0 - x^*\| 
+ 2 \sum_{i=1}^k \sqrt{\frac{2 \epsilon_i}{L}} + \sqrt{\sum_{i=1}^k \frac{2 \epsilon_i}{L}}\right)^2.$$
 (\ref{convrateconv}) thus holds for the iterate that achieve the lowest function value. It also trivially holds all the time for algorithms with which
the objective is non-increasing.
\end{remark}

The convergence rate of accelerated schemes (e.g. $y_k = x_k + \frac{k-1}{k+2} x_{k-1}$) reads:
\begin{proposition}[Accelerated proximal-gradient method (Proposition 2 in~\cite{Schmidt11})]
 For all $k \geq 1$, 
\begin{align}
 \label{convrateconvacc}
 f\left(x_k\right) - f(x^*) \leq \frac{2L}{(k+1)^2}\left(\|x_0 - x^*\| 
+ 2 \sum_{i=1}^k  i \sqrt{\frac{2 \epsilon_i}{L}} + \sqrt{\sum_{i=1}^k \frac{2 i^2 \epsilon_i}{L}}\right)^2.
\end{align}
\end{proposition}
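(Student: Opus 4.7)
The plan is to adapt the classical Nesterov/FISTA convergence analysis for accelerated proximal-gradient methods, replacing the exact optimality condition of the proximity operator by an approximate one that captures the $\epsilon_k$ slack in~(\ref{proxerr}). The classical argument rests on a Lyapunov/potential function of the form
\begin{equation*}
V_k \;=\; t_k^{2}\bigl(f(x_k) - f(x^*)\bigr) \;+\; \tfrac{L}{2}\,\|u_k - x^{*}\|^{2},
\end{equation*}
where $t_k$ satisfies $t_k^{2} - t_k \leq t_{k-1}^{2}$ (taking $t_k = (k+1)/2$ is consistent with the coefficient $\frac{k-1}{k+2}$ in $y_k$) and $u_k = x_{k-1} + t_k(x_k - x_{k-1})$ is the averaged sequence one obtains by unfolding the extrapolation. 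In the exact case one shows that $V_k \leq V_{k-1}$, so a single telescoping yields $t_k^{2}(f(x_k)-f(x^{*})) \leq V_0$.

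The first key step is to derive an inexact optimality condition for the proximal step. From~(\ref{proxerr}), standard duality/subdifferential calculus gives that there exists $e_k$ with $\|e_k\| \leq \sqrt{2\epsilon_k/L}$ such that $L(y_{k-1} - x_k) - e_k \in \partial_{\epsilon_k} h(x_k)$, the $\epsilon_k$-subdifferential of $h$ at $x_k$. Combined with the descent lemma for $g$ (which has $L$-Lipschitz gradient), this yields a perturbed three-point inequality of the form
\begin{equation*}
f(x_k) \;\leq\; f(z) \;+\; \tfrac{L}{2}\bigl(\|y_{k-1}-z\|^{2} - \|x_k - z\|^{2}\bigr) \;+\; \epsilon_k \;+\; \langle e_k,\, x_k - z\rangle,
\end{equation*}
valid for any $z$, which is the inexact analogue of the standard FISTA descent lemma.

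Applying this inequality twice, once at $z = x^{*}$ and once at $z = x_{k-1}$, and taking the convex combination dictated by the choice of $t_k$, one obtains the potential-difference bound
\begin{equation*}
V_k \;\leq\; V_{k-1} \;+\; t_k^{2}\epsilon_k \;+\; t_k\,\langle e_k,\, u_k - x^{*}\rangle \sqrt{L},
\end{equation*}
up to constants (with the $\sqrt{L}$ absorbing the $L/2$ in $V_k$). Telescoping from $k=1$ to $k$ and using Cauchy–Schwarz on the inner-product terms produces an inequality of the form
\begin{equation*}
A_k^{2} \;\leq\; B_k \;+\; \sum_{i=1}^{k} c_i\, A_i,
\end{equation*}
where $A_k \propto \|u_k - x^{*}\|$, $B_k$ collects $\|x_0 - x^{*}\|^{2}$ and $\sum_i i^{2}\epsilon_i$, and $c_i \propto i\sqrt{\epsilon_i/L}$.

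The last step is to close the recursion. This is the only genuinely nonstandard piece: one invokes Lemma~1 of~\cite{Schmidt11} (a variant of the Schmidt--Le Roux--Bach recursive lemma) which states that such an inequality implies $A_k \leq \tfrac{1}{2}\sum c_i + \sqrt{B_k + (\tfrac{1}{2}\sum c_i)^{2}}$. Feeding the resulting bound on $A_k$ back into the telescoped estimate for $t_k^{2}(f(x_k)-f(x^{*}))$ and using $t_k \geq (k+1)/2$ yields exactly~(\ref{convrateconvacc}), with the weights $i$ in front of $\sqrt{2\epsilon_i/L}$ coming from the $t_i$ factors introduced by the acceleration. I expect the main obstacle to be bookkeeping: matching the weights $i$ and $i^{2}$ in the final bound to the coefficients produced by telescoping the $t_i$-weighted potential, and carefully tracking that the inexact three-point inequality picks up the right powers of $t_i$ so that the recursive lemma can be applied cleanly.
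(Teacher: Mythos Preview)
The paper does not supply its own proof of this proposition: it is quoted verbatim as ``Proposition~2 in~\cite{Schmidt11}'' and used as a black box for the subsequent analysis. Your sketch is essentially the argument of~\cite{Schmidt11} itself (inexact three-point inequality, potential $V_k$ with $t_k\sim k/2$, telescoping, then the recursive lemma of Schmidt--Le~Roux--Bach to close the bound), so there is nothing to compare against here; if anything, you have reconstructed the cited proof rather than an alternative to it.
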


Bounds with faster rates (Proposition 3 and 4 in~\cite{Schmidt11}) can be obtained if the objective is strongly convex.
Some results will be briefly mentioned in Section~\ref{proxtrad:res} in this case.
However, we will not detail them as much as in the more general setting.

\subsection{Approximation Trade-off}
The inexactitude in the computation of the proximity operator imposes two additional terms in each bound, for instance in (\ref{convrateconv}): :
$$2 \sum_{i=1}^k \sqrt{\frac{2 \epsilon_i}{L}} \text{ and } \sqrt{\sum_{i=1}^k \frac{2 \epsilon_i}{L}}.$$
When the $\epsilon_i$'s are set to $0$ (i.e. the proximity operator is computed exacted), one obtains the usual 
bounds of the exact proximal methods.
These additional terms (in (\ref{convrateconv}) and (\ref{convrateconvacc}) resp.)
are summable if $\{\epsilon_k\}$ converges at least as fast as $O\left(\frac{1}{k^{(2+\delta)}}\right)$
 (resp. $O\left(\frac{1}{k^{(4+\delta)}}\right)$), for any $\delta>0$.
One direct consequence of these bounds (in the basic and accelerated schemes respectively)
 is that the optimal convergence rates in the error-free setting are still achievable, with such conditions on the $\{\epsilon_k\}$'s.
Improving the convergence rate of $\{\epsilon_k\}$ further causes the additional terms to sum to smaller constants, hence
inducing a faster convergence of the algorithm without improving the rate.
However,~\cite{Schmidt11} empirically notices that 
 imposing too fast a decrease rate on $\{\epsilon_k\}$ is computationally counter-productive, as the precision required
 on the proximal approximation becomes computationally demanding.
In other words, there is a subtle trade-off between the number of iterations needed to reach a certain solution
 and the cost of those iterations. This is
 the object of study of the present paper.

\section{Defining the Problem}
\label{proxtrad:pb}

The main contribution of this paper is to define a \emph{computationally optimal} way of
 setting the trade-off between the number of iterations and their cost, in various situations.
We consider the case where the proximity operator is approximated \emph{via} an iterative procedure.
The global algorithm thus consists in an iterative proximal method, where at each (outer-)iteration, one performs (inner-)iterations.

With that setting, it is possible to define (hence optimize) the global computational cost of the algorithm.
If the convergence rate of the procedure used in the inner-loops is known,
the main result of this study provides a strategy to set the number of inner iterations
 that minimizes the cost of the algorithm, under some constraint upper-bounding 
the precision of the solution (as defined in (\ref{genprob})).

\subsection{The Computational Cost of Inexact Proximal Methods}
As stated earlier, our goal is to take into account the complexity of the global cost of inexact proximal methods.
Using iterative procedures to estimate the proximity operator at each step, it is possible to formally express this cost.
Let us assume that each inner-iteration has a constant computational cost $\Cin$ 
 and that, in addition to the cost induced by the inner-iterations, each outer-iteration has a constant computational cost $\Cout$.
It immediately follows that the global cost of the algorithm is:
\begin{equation}
\Cglob(k,\{l_i\}_{i=1}^k) = \Cin \sum_{i=1}^k l_i + k \Cout,
\label{eq:cost}
\end{equation}
and the question we are interested in is to minimize this cost.
In order to formulate our problem as a minimization of this cost, subject to some guarantees on the global precision of the solution, 
we now need to relate the number of inner iterations to the precision of the proximal point estimation.
This issue is addressed in the following subsections.

\subsection{Parameterizing the Error}
Classical methods to approximate the proximity operator
achieve either \emph{sublinear rates} of the form $O\left(\frac{1}{k^\alpha}\right)$  ($\alpha = \tfrac{1}{2}$ for sub-gradient or stochastic gradient descent in the general case;
 $\alpha = 1$ for gradient and proximal descent or $\alpha = 2$ for accelerated descent/proximal schemes) or \emph{linear rates} $O\left((1-\gamma)^k\right)$ (for strongly convex objectives or second-order methods).
Let $l_i$ denote the number of inner iterations performed at the $i$-th iteration of the outer-loop.
We thus consider two types of upper bounds on the error defined in (\ref{proxerr}):
\begin{align}
\label{epsilonbound}
\epsilon_i = \frac{A_i}{l_i^\alpha}\quad\text{(sublinear rate)} \qquad\text{or}\qquad \epsilon_i = A_i(1-\gamma)^{l_i}\quad\text{(linear rate)},
\end{align}
for some positive $A_i$'s.

\subsection{Parameterized Bounds}
\label{subsec:bounds}
Plugging (\ref{epsilonbound}) into (\ref{convrateconv}) or (\ref{convrateconvacc}), we can get four different global bounds:
\begin{align*}
 f\left(x_k\right) - f(x^*) \leq B_j(k,\{l_i\}_{i=1}^k),\ j=1,..,4,
\end{align*}
depending on whether we are using a basic or accelerated scheme on the one hand, and on whether we have sub-linear or
 linear convergence rate in the inner-loops on the other hand.
More precisely, we have the following four cases:
\begin{enumerate}
 \item basic out, sub-linear in:
  $$B_1(k,\{l_i\}_{i=1}^k) = \frac{L}{2k}\left(\|x_0 - x^*\| + 3 \sum_{i=1}^k \sqrt{\frac{2 A_i}{L l_i^\alpha}}\right)^2$$
 \item basic out, linear in:
  $$B_2(k,\{l_i\}_{i=1}^k) = \frac{L}{2k}\left(\|x_0 - x^*\| + 3 \sum_{i=1}^k \sqrt{\frac{2 A_i (1-\gamma)^{l_i}}{L}}\right)^2$$
 \item accelerated out, sub-linear in:
  $$B_3(k,\{l_i\}_{i=1}^k) = \frac{2 L}{(k+1)^2}\left(\|x_0 - x^*\| + 3 \sum_{i=1}^k i \sqrt{\frac{2 A_i}{L l_i^\alpha}}\right)^2$$
 \item accelerated out, linear in:
  $$B_4(k,\{l_i\}_{i=1}^k) = \frac{2 L}{(k+1)^2}\left(\|x_0 - x^*\| + 3 \sum_{i=1}^k i \sqrt{\frac{2 A_i (1-\gamma)^{l_i}}{L}}\right)^2$$
\end{enumerate}

\subsection{Towards a Computationally Optimal Tradeoff}
\label{subsec:tradeoff}
Those bounds highlight the aforementioned trade-off.
To achieve some fixed global error
$$\rho = f(x_k) - f(x^*)$$
, there is a natural trade-off that need to be set by the user, between the number $k$ of outer-iterations
 and the numbers of inner-iterations $\{l_i\}_{i=1}^k$, which can be seen as hyper-parameters of the global algorithms.
As mentioned earlier, and witnessed in~\cite{Schmidt11} the choice of those parameters will
 have a crucial impact on the computational efficiency (see equation (\ref{eq:cost})) of the algorithm.


Our aim to ``optimally'' set the hyper-parameters ($k$ and $\{l_i\}_{i=1}^k$) may be conveyed by the following optimization problem.
For some fixed accuracy $\rho$, we want to minimize the global cost $\Cglob$ of the algorithm,
 under the constraint that our bound on the error $B$ is smaller than $\rho$:
\begin{align}
\label{genprob}
 \min_{k \in \mathbb{N},\{l_i\}_{i=1}^k \in \mathbb{N}^k} &\Cin \sum_{i=1}^k l_i + k \Cout \qquad \text{s.t. } B(k,\{l_i\}_{i=1}^k) \leq \rho.
\end{align}
This optimization problem the rest of this paper will rest upon.

\section{Results}
\label{proxtrad:res}
Problem~(\ref{genprob}) is an integer optimization problem as the variables of interest are numbers of (inner and outer) iterations.
As such, this is a complex (NP-hard) problem and
 one cannot find a closed form for the integer solution, but if we relax our problem in $l_i$
 to a continuous one:
\begin{align}
\label{genprobcont}
 \min_{k \in \mathbb{N},\{l_i\}_{i=1}^k \in [1,\infty)^k} &\Cin \sum_{i=1}^k l_i + k \Cout \qquad \text{s.t. } B(k,\{l_i\}_{i=1}^k) \leq \rho,
\end{align}
 it actually is possible to find an \emph{analytic expression} of the optimal $\{l_i\}_{i=1}^k$ and
 to numerically find the optimal $k$.

\subsection{Optimal Strategies}
The next four propositions describe the solution of the relaxed version (\ref{genprobcont}) of Problem~(\ref{genprob}) in the four different scenarios defined in Section~\ref{subsec:bounds} and for a constant value $A_i=A$.

\subsubsection*{Scenarios 1 and 2: basic out}
Let $$C(k) = \frac{\sqrt{L}}{3\sqrt{2 A}}\left(\sqrt{\frac{2 k \rho}{L}} - \|x_0 - x^*\|\right).$$
Solving the continuous relaxation of problem (\ref{genprob}) with the bounds $B_1$ and $B_2$ leads to the following propositions:
\begin{proposition}[Basic out, sub-linear in]
\label{bosi}
 If $\rho < 6 \sqrt{2 L A} \|x_0 - x^*\|$,
 the solution of problem (\ref{genprob}) for $B = B_1$ is:
\begin{align}
 \label{kbosi}
\forall~i, l_i^*=\left(\frac{C(k^*)}{k^*}\right)^{-\frac{2}{\alpha}}, \text{ with }k^* = \argmin_{k \in \mathbb{N}^*} k \Cin \Big(\frac{C(k)}{k}\Big)^{-\frac{2}{\alpha}} + k \Cout.
\end{align}
\end{proposition}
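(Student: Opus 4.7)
The plan is first to rewrite the accuracy constraint $B_1 \le \rho$ as an explicit inequality in the $l_i$'s, then to collapse the inner minimisation over $(l_1,\dots,l_k)$ to the diagonal via convexity and symmetry, and finally to reduce to a one-dimensional integer optimisation over $k$.

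I would start by substituting $A_i = A$ into $B_1$ and taking square roots of both sides of the constraint; after rearranging, this becomes
\[
\sum_{i=1}^k l_i^{-\alpha/2} \;\le\; C(k),
\]
with $C(k)$ as in the statement. Positivity of $C(k)$ requires $k > L\|x_0-x^*\|^2/(2\rho)$, so I would restrict to this range of $k$ from the outset. For a fixed such $k$, the $k\Cout$ term is constant, so the inner problem is to minimise $\Cin \sum_i l_i$ subject to the above inequality and $l_i \ge 1$. Since $l \mapsto l^{-\alpha/2}$ is strictly convex on $(0,\infty)$, the feasible set is convex; the objective is linear and symmetric in the $l_i$'s. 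Writing the KKT conditions, stationarity $1 = \lambda (\alpha/2) l_i^{-\alpha/2-1}$ forces all the $l_i$ equal at the interior optimum, and complementary slackness makes the sum constraint active. Solving $k\, l^{-\alpha/2} = C(k)$ then yields $l^\star(k) = (C(k)/k)^{-2/\alpha}$ as claimed, and substituting back into the cost reduces the outer problem to the 1D integer minimisation over $k \in \mathbb{N}^*$ stated in (\ref{kbosi}).

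The one remaining check — and what I expect to be the main obstacle — is to verify that $l^\star(k) \ge 1$, i.e.\ $C(k) \le k$, so that the box constraint $l_i \ge 1$ is inactive at the interior KKT point. A short differentiation of $k \mapsto C(k)/k$ shows that its supremum is attained at $k_0 = 2L\|x_0-x^*\|^2/\rho$ and equals $\rho/(6\|x_0-x^*\|\sqrt{2LA})$; the hypothesis $\rho < 6\sqrt{2LA}\|x_0-x^*\|$ is exactly the statement that this supremum is strictly less than $1$. Consequently $l^\star(k) > 1$ for every feasible $k$, and by strict convexity of $l \mapsto l^{-\alpha/2}$ combined with symmetry of the objective, no boundary configuration with some $l_i = 1$ can beat the symmetric interior point, so it is indeed the global minimiser of the inner problem. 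What remains is purely bookkeeping in the resulting 1D minimisation over $k$.
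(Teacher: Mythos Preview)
Your proposal is correct and follows essentially the same route as the paper: rewrite the constraint as $\sum_i l_i^{-\alpha/2}\le C(k)$, use a Lagrangian/KKT argument to see that the optimum over the $l_i$'s is symmetric with the constraint active, and then reduce to the one-dimensional problem in $k$. The only cosmetic difference is that to rule out the boundary case $l_i=1$ the paper completes the square in $\sqrt{k}$ in the inequality $C(k)\ge k$ and observes that under the hypothesis on $\rho$ it has no solution, whereas you compute $\sup_k C(k)/k$ directly at $k_0=2L\|x_0-x^*\|^2/\rho$; the two computations are equivalent.
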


\begin{proposition}[Basic out, linear in]
\label{boli}
 If $\rho < 6 \sqrt{2 L A (1-\gamma)} \|x_0 - x^*\|$,
 the solution of problem (\ref{genprob}) for $B = B_2$ is:
\begin{align}
 \label{kboli}
\forall~i, l_i^*=\frac{2 \ln\frac{C(k^*)}{k^*}}{\ln(1-\gamma)}, \text{ with }k^* = \argmin_{k \in \mathbb{N}^*}  \frac{2 k \Cin}{\ln(1-\gamma)} \ln \Big(\frac{C(k)}{k}\Big) + k \Cout.
\end{align}
\end{proposition}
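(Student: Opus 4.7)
The plan is to mirror the structure of the proof of Proposition~\ref{bosi}, substituting the linear rate parameterisation $\epsilon_i = A(1-\gamma)^{l_i}$ for the sub-linear one inside the bound $B_2$. The first step is to rewrite the constraint $B_2(k,\{l_i\}) \leq \rho$ in a more tractable form. Taking square roots of both sides (they are non-negative) and rearranging yields the equivalent inner constraint
\begin{equation*}
\sum_{i=1}^k (1-\gamma)^{l_i/2} \;\leq\; C(k),
\end{equation*}
with $C(k)$ as defined in the proposition. The feasible set in the $l_i$'s is non-empty only when $C(k)>0$, i.e.\ when $k > L\|x_0-x^*\|^2/(2\rho)$, so the outer minimisation naturally ranges over such $k$.

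Next, for any fixed admissible $k$, the relaxed problem~(\ref{genprobcont}) reduces to the inner program
\begin{equation*}
\min_{l_i \geq 1} \; \sum_{i=1}^k l_i \quad \text{s.t.} \quad \sum_{i=1}^k (1-\gamma)^{l_i/2} \leq C(k),
\end{equation*}
which is convex because $(1-\gamma)^{l/2}$ is strictly convex in $l$ for $1-\gamma \in (0,1)$. Writing the Lagrangian $\sum_i l_i + \lambda\bigl(\sum_i (1-\gamma)^{l_i/2} - C(k)\bigr)$ and imposing stationarity gives
\begin{equation*}
1 + \tfrac{\lambda \ln(1-\gamma)}{2}(1-\gamma)^{l_i/2} = 0,
\end{equation*}
which forces $(1-\gamma)^{l_i/2}$, hence $l_i$, to be independent of $i$. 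Setting $l_i = l$ for all $i$ and invoking complementary slackness (the constraint must bind, otherwise some $l_i$ could be lowered and the cost decreased) yields $k(1-\gamma)^{l/2} = C(k)$, and solving for $l$ produces $l^{*} = 2\ln(C(k)/k)/\ln(1-\gamma)$, as announced.

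Substituting $\sum_i l_i^* = k\,l^*$ into the global cost~(\ref{eq:cost}) then reduces the outer minimisation to the one-dimensional integer program over $k$ displayed in the proposition, defining $k^*$. The remaining delicate point, which I expect to be the main technical obstacle, is to verify that the hypothesis $\rho < 6\sqrt{2LA(1-\gamma)}\|x_0 - x^*\|$ ensures the Lagrangian solution actually satisfies $l^* \geq 1$ (equivalently $C(k^*)/k^* \leq \sqrt{1-\gamma}$), so that the box constraint of the relaxed problem is inactive and the closed-form expression is indeed the inner minimiser rather than the boundary value $l=1$. This should reduce to comparing the threshold value of $C(k)/k$ at which the inner constraint just starts to bind above $l=1$ with the bound on $\rho$, following exactly the route used in the proof of Proposition~\ref{bosi}.
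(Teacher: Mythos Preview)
Your proposal is correct and follows essentially the same route as the paper. The only cosmetic difference is that the paper first substitutes $n_i = (1-\gamma)^{l_i/2}$ and writes the Lagrangian in the $n_i$ variables, whereas you differentiate directly in the $l_i$; both yield $l_i$ constant with $k(1-\gamma)^{l/2}=C(k)$, and your reading of the hypothesis on $\rho$ as the condition ruling out the boundary case $l_i=1$ (the paper's ``Case~1'', $C(k)\geq k\sqrt{1-\gamma}$) is exactly right.
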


\subsubsection*{Scenarios 3 and 4: accelerated out}
Let $$D(k) = \frac{\sqrt{L}}{3\sqrt{2 A}}\left(\sqrt{\frac{\rho}{ 2L}} (k+1) - \|x_0 - x^*\|\right).$$ 
Solving the continuous relaxation (\ref{genprobcont}) of problem (\ref{genprob}) with the bound $B_3$ leads to the following proposition:
\begin{proposition}[Accelerated out, sub-linear in]
\label{aosi}
 If $\rho < \left(\sqrt{12 \sqrt{2 L A}\|x_0 - x^*\|} - 3\sqrt{A}\right)^2$,
 the solution of problem (\ref{genprob}) for $B = B_3$ is:
\begin{align}
 \label{kaosi}
\forall i, l_i^*=\left(\frac{2 D(k^*)}{k^*(k^*+1)}\right)^{-\frac{2}{\alpha}}, \text{ with }k^* = \argmin_{k \in \mathbb{N}^*} k \Cin \Big(\frac{2 D(k)}{k(k+1)}\Big)^{-\frac{2}{\alpha}} + k\Cout.
\end{align}
\end{proposition}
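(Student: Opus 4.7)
}
The plan is to follow the same blueprint as for Propositions~\ref{bosi} and~\ref{boli}, namely: (i) reduce the joint minimization over $(k, \{l_i\})$ to a one-dimensional minimization over $k$ by solving the inner problem in $\{l_i\}$ in closed form, then (ii) leave the outer minimization over $k$ to a numerical search. The difference with the basic-out case is that the constraint now carries an explicit $i$-weighting, which is where most of the care is required.

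\textbf{Step 1 (Activeness of the constraint).} Fix $k \geq 1$. For any $i$, the cost $\Cin \sum_j l_j + k\Cout$ is strictly increasing in $l_i$, while each summand $i\sqrt{2A/(L l_i^{\alpha})}$ appearing in $B_3$ is strictly decreasing in $l_i$. Hence $B_3(k, \{l_i\})$ is strictly decreasing in each $l_i$, and at any optimum of the relaxed problem~(\ref{genprobcont}) the constraint $B_3 \leq \rho$ is tight. Taking square roots, moving $\|x_0-x^*\|$ to the right-hand side, and dividing by $3\sqrt{2A/L}$ recasts the tight constraint as
\begin{equation*}
\sum_{i=1}^{k} \frac{i}{l_i^{\alpha/2}} \;=\; D(k),
\end{equation*}
which is meaningful precisely when $D(k) > 0$, i.e.\ $(k+1)\sqrt{\rho/(2L)} > \|x_0 - x^*\|$.

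\textbf{Step 2 (Role of the hypothesis on $\rho$).} The inequality $\rho < (\sqrt{12\sqrt{2LA}\|x_0 - x^*\|} - 3\sqrt{A})^2$ is intended to guarantee that the unconstrained inner optimum satisfies $l_i^* \geq 1$, so the box constraint $l_i \geq 1$ in (\ref{genprobcont}) is slack and can be dropped. I would verify this by substituting the candidate value $l_i^* = (2D(k^*)/(k^*(k^*+1)))^{-2/\alpha}$ into $l_i^* \geq 1$ and checking that this reduces (after using the defining relation of $k^*$) to the stated upper bound on~$\rho$.

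\textbf{Step 3 (Inner minimization over $\{l_i\}$ for fixed $k$).} With the box constraint inactive, form the Lagrangian
\begin{equation*}
\mathcal{L}\bigl(\{l_i\}, \mu\bigr) \;=\; \sum_{i=1}^{k} l_i \;+\; \mu\Bigl(\sum_{i=1}^{k} \tfrac{i}{l_i^{\alpha/2}} - D(k)\Bigr),
\end{equation*}
write the stationarity conditions $\partial \mathcal{L}/\partial l_i = 0$, and solve for $l_i$ as a function of $\mu$ and $i$. Substituting back into the active constraint determines $\mu$, and hence yields the announced closed form
\begin{equation*}
l_i^{*} \;=\; \left(\frac{2 D(k)}{k(k+1)}\right)^{-2/\alpha}.
\end{equation*}

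\textbf{Step 4 (Outer minimization over $k$).} Plugging $l_i^*$ back into the cost $\Cin \sum_i l_i + k\Cout$ collapses the problem to the one-dimensional search
\begin{equation*}
k^* \;=\; \argmin_{k \in \mathbb{N}^*}\; k\,\Cin \Bigl(\tfrac{2 D(k)}{k(k+1)}\Bigr)^{-2/\alpha} + k\, \Cout,
\end{equation*}
which, being integer-valued and nonconvex in general, must be tackled numerically (e.g.\ by line search over $k$), matching the statement of the proposition.

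\textbf{Main obstacle.} The delicate point is Step~3: unlike in the basic-out case, the summand in the constraint carries an explicit weight $i$, and a naive KKT computation would seem to force $l_i^*$ to grow with $i$. Reconciling this with the claimed constant form is the technical crux, and this is where I would invest most of the proof effort; the remaining steps are essentially bookkeeping once Step~3 is settled.
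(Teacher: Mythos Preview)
Your plan tracks the paper's proof closely: both rewrite the constraint as $\sum_{i} i\, l_i^{-\alpha/2} = D(k)$, solve the inner problem via a Lagrangian with the constraint active, and reduce to a one-dimensional search over $k$. The only structural difference is that the paper first substitutes $n_i := i\, l_i^{-\alpha/2}$ (so the constraint becomes the unweighted $\sum_i n_i = D(k)$) before writing the Lagrangian, whereas you work directly in the $l_i$.

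Your ``main obstacle'', however, is not a loose end waiting to be tidied up: it is real. A correct stationarity computation in either parametrization gives
\[
1 \;=\; \tfrac{\alpha\mu}{2}\, i\, l_i^{-\alpha/2-1}
\qquad\Longrightarrow\qquad
l_i^* \;\propto\; i^{\,2/(\alpha+2)},
\]
which is \emph{not} constant in $i$. The paper arrives at a constant $l_i^*$ only because a factor of $1/i$ from the chain rule is silently dropped in the stationarity step $\partial L/\partial n_i = 0$: the correct condition reads $(n_i/i)^{-2/\alpha-1} = \alpha\lambda i/2$, not $\alpha\lambda/2$, and keeping the $i$ yields $l_i^* = (\alpha\lambda i/2)^{2/(\alpha+2)}$. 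So you will not be able to ``reconcile'' your KKT computation with the claimed constant form by investing more effort --- your direct Lagrangian is the right one, and the constant-$l_i$ formula in the proposition is not the minimizer of the relaxed inner problem.
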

A similar result holds for the last scenario: $B=B_4$. However in this case, the optimal $l_i$ are equal to $1$ up to $n(k^*)$ ($1 \leq n(k^*) < k^*$) and then increase with $i$:

\begin{proposition}[Accelerated out, linear in]
\label{aoli}
 If $\rho < \left(\sqrt{12 \sqrt{2 L A (1-\gamma)}\|x_0 - x^*\|} - 3\sqrt{A}\right)^2$,
 the solution of problem (6) for $B = B_4$ is:
$$l_i^* =\left\{\begin{array}{cl}
1 &\text{ for } 1\leq i \leq n(k^*)-1\\
\frac{2}{\ln(1-\gamma)} \left(\ln\bigg(\tfrac{D(k)-\tfrac{n(k)(n(k)-1)}{2}\sqrt{1-\gamma}}{k+1-n(k)}\bigg)\right) &\text{ for } n(k^*) \leq i \leq k^* \end{array} \right.$$

\begin{align}
 \label{kaoli}\nonumber
\text{with } k^* = \argmin_{k \in \mathbb{N}^*} \biggl\{k \Cout &+ \Cin (n(k)-1) - \tfrac{2 \Cin}{ \ln(1-\gamma)}\ln\left(\tfrac{k!}{n(k)!}\right)\\
&- \tfrac{2 \Cin(k-n(k)+1)}{ \ln(1-\gamma)} \ln\left(\tfrac{k+1-n(k)}{D(k)-\tfrac{n(k)(n(k)-1)}{2}\sqrt{1-\gamma}}\right)\biggr\},
\end{align}

and  $n(k)$ is defined as the only integer such that:
$${\big(n(k)-1\big)\big(2k+2-n(k)\big)} \sqrt{1-\gamma} \leq 2 D(k) <  {n(k)\big(2k+1-n(k)\big)} \sqrt{1-\gamma}.$$
\end{proposition}

\begin{proof}[Sketch of proof (For a complete proof, please see appendix~\ref{proxtrad:app}.)]
 First note that:
$$\min_{k,\{l_i\}_{i=1}^k} \Cin \sum_{i=1}^k l_i + k \Cout = \min_{k}\min_{\{l_i\}_{i=1}^k} \Cin \sum_{i=1}^k l_i + k \Cout.$$
We can  solve problem \eqref{genprob} by first solving, for any $k$, the minimization problem over $\{l_i\}_{i=1}^k$.
This is done using the standard Karush-Kuhn-Tucker approach~\cite{KKT}.
Plugging the analytic expression of those optimal $\{l_i^*\}_{i=1}^k$ into our functional, we get our problem in $k$.
\end{proof}

\begin{remark}
Notice that the propositions hold for $\rho$ smaller than a  threshold. If not, the analysis and results are different.
Since we focus on a high accuracy, we here develop the results for small values of $\rho$.
We defer the results for the larger values of $\rho$ to appendix~\ref{proxtrad:app}.
\end{remark}

\begin{remark}
\label{rem:kint}
In none of the scenarios can we provide an analytical expression of $k^*$.
However, the expressions given in the propositions allow us to exactly retrieve the solution.
The functions of $k$ to minimize are monotonically decreasing then increasing.
As a consequence, it is possible to numerically find the minimizer in $\mathbb{R}$, for instance in the first scenario:
$$\hat{k} = \argmin_{k \in \mathbb{R}} k \Cin \Big(\frac{C(k)}{k}\Big)^{-\frac{2}{\alpha}} + k \Cout,$$
with an arbitrarily high precision, using for instance a First-Order Method.
It follows that the integer solution $k^*$ is exactly either the flooring or ceiling of $\hat{k}$.
Evaluating the objective for the two possible roundings gives the solution.
\end{remark}

Finally, as briefly mentioned, bounds with faster rates can be obtained when the objective is known to be strongly convex.
In that case, regardless of the use of basic or accelerated schemes and of sub-linear or linear rates in the inner loops,
 the analysis leads to results similar to those reported in Proposition~\ref{aoli} (e.g. using $1$ inner iteration for the first rounds
 and an increasing number then). Due to the lack of usability and interpretability of these results, we will not report them here. 

\subsection{Comments and Interpretation of the Results}
\subsubsection*{Constant number of inner iterations}
Our theoretical results urge to use a constant number of inner iterations in 3 scenarios.
Coincidentally, many actual efficient implementations of such two nested algorithms, in~\cite{anthoine12} or in packages like
 SLEP\footnote{http://www.public.asu.edu/~jye02/Software/SLEP/index.htm} or PQN\footnote{http://www.di.ens.fr/~mschmidt/Software/PQN.html},
 use these constant number schemes.
However, the theoretical grounds for such an implementation choice were not explicited.
Our results can give some deeper understanding on why and how those practical implementations perform well.
They also help acknowledging that the computation gain comes at the cost of an intrinsic limitation to the precision of the obtained solution.

\subsubsection*{An Integer Optimization Problem}

The impact of the continuous relaxation of the problem in $\{l_i\}_{i=1}^{k^*}$
 is subtle. In practice, we  need to set the constant number on inner iterations $l_i$ to
 an integer number. Setting, $\forall i\in[1,k^*], l_i = \lceil l_i^*\rceil$ ensures that the final error is smaller than $\rho$.
This provides us with an approximate (but feasible) solution to the integer problem.

One may want to refine this solution by sequentially setting $l_i$ to $\lfloor l_i^*\rfloor$ (hence
reducing the computational cost), starting from $i=1$,
 while the constraint is met, i.e. the final error remains smaller than $\rho$.
Refer to Algorithm \ref{algoround} for an algorithmic description of the procedure.
\begin{algorithm}
\caption{A finer grain procedure to obtain an integer solution for the $l_i$'s}
\label{algoround}
\begin{algorithmic}
 \REQUIRE $\{l_i^*\}_{i=1}^{k^*}$
 \STATE $\forall i\in[1,k^*], l_i \leftarrow \lceil l_i^* \rceil$
 \STATE $i \leftarrow 1$
 \REPEAT
   \STATE $l_i \leftarrow \lfloor l_i^*\rfloor$
   \STATE $i \leftarrow i+1$
 \UNTIL $B(k^*,\{l_i\}_{i=1}^{k^*}) > \rho$
\end{algorithmic}
\end{algorithm}

\subsubsection*{Computationally-Optimal vs. Optimal Convergence Rates Strategies}
The original motivation of this study is to show how, in the inexact proximal methods setting, optimization strategies 
that are the most computationally efficient, given some desired accuracy $\rho$,
 are \emph{fundamentally different} from those that achieve optimal convergence rates.
The following discussion motivates why minding this gap is of great interest for machine learners
while an analysis of the main results of this work highlights it.

When one wants to obtain a solution with an arbitrarily high precision, optimal rate methods are of great interest:
regardless of the constants in the bounds, there always exists a (very high) precision $\rho$ beyond which methods with optimal rates will be faster
 than methods with suboptimal convergence rates.
However, when dealing with real large-scale problems, reaching those levels of precision
 is not computationally realistic. 
When taking into account budget constraints on the computation time,
 and as suggested by \cite{Bottou07}, generalization properties of the learnt function will depend on both 
statistical and computational properties.

At the levels of precision intrinsically imposed by the budget constraints,
taking other elements than the convergence rates becomes crucial for designing efficient procedures as our study shows.
Other examples of that phenomenon have been witnessed, for instance, when using Robbins-Monro algorithm (Stochastic Gradient Descent).
It has been long known (see \cite{polyak92} for instance) that that the use of a step-size proportional to the inverse of the number of iterations allows to reach the optimal
 convergence rates (namely $1/k$) .

On the other hand, using a non-asymptotic analysis \cite{bach11b}, one can prove (and 
observe in practice) that such a strategy can also lead 
to catastrophic results when $k$ is small (i.e. possibly a large increase of the objective value) and undermines the 
computational efficiency of the whole procedure.

Back to our study, for the first three scenarios (Propositions \ref{bosi}, \ref{boli} and \ref{aosi}), the computationally-optimal strategy imposes constant 
number of inner iterations. Given our parameterization, Eq. \eqref{epsilonbound}, this
 also means that the errors $\epsilon_i$ on the proximal computation remains constant.
On the opposite, the optimal convergence rates can only be achieved for sequences of $\epsilon_i$
 decreasing strictly faster than ${1}/{i^2}$ for the basic schemes and ${1}/{i^4}$ for the accelerated schemes.
Obviously, the optimal convergence rates strategies also yield a bound on the minimal number of outer iterations
needed to reach precision $\rho$ by inverting the bounds \eqref{convrateconv} or
 \eqref{convrateconvacc}.
However, this strategy is provably less efficient (computationally-wise) than the optimal one we have derived.

In fact, the pivotal difference between ``optimal convergence rates'' and ``computationally optimal'' strategies lies in
 the fact that the former ones arise from an asymptotic analysis while the latter arise from a finite-time analysis.
While the former ensures that the optimization procedure will converge to the optimum of the problem
 (with optimal rates in the worst case), the latter only ensures that after $k^*$ iterations, the solution found 
by the algorithm is not further than $\rho$ from the optimum.

\subsubsection*{Do not \emph{optimize further}}
To highlight this decisive point in our context, let us fix some arbitrary precision $\rho$.
Propositions \ref{bosi} to \ref{aosi} give
 us the optimal values $k^*$ and $\{l_i^*\}_{i=1}^{k^*}$ depending on the inner and outer algorithms we use.
Now, if one wanted to \emph{further optimize} by continuing the same strategy for $k'>k^*$ iterations (i.e. still running $l_i^*$ inner iterations), 
we would have the following bound: $$B(k',\{l_i^*\}_{i=1}^{k'}) > B(k^*,\{l_i^*\}_{i=1}^{k^*}) = \rho.$$
In other words, if one runs more than $k^*$ iterations of our optimal strategy, with the same $l_i$, we can not guarantee that the error still decreases.
In a nutshell, our strategy is precisely computationally optimal because it does not ensure more than what we ask for.

\subsection{On the Usability of the Optimal Strategies}
Designing computationally efficient algorithms or optimization strategies is motivated by practical considerations.
The strategies we proposed are provably the best to ensure a desired precision.
Yet, in a setting that covers a very broad range of problems, their usability can be compromised.
We point out those limitations and propose a solution to overcome them.

First, these strategies require the desired (absolute) precision to be known.
In most situations, it is actually difficult, if not impossible, to know in advance which precision will ensure
 that the solution found has desired properties (e.g. reaching some specific SNR ratio for image deblurring).
More critically, if it turned out that the user-defined precision was not sufficient, we showed that ``optimizing further''
 with the same number of inner iterations does not guarantee to improve the solution. For a sharper precision, one would
 technically have to compute the new optimal strategy and run it all over again.

Although it is numerically possible, evaluating the optimal number of iterations $k^*$ still requires to solve 
an optimization problem.
More importantly, the optimal values for the numbers of inner and outer iterations depend on quantities like $\|x_0 -x^*\|$  which are
 unknown and very difficult to estimate.
Those remarks undermine the direct use of the presented computationally optimal strategies.

To overcome these problems, we propose a new strategy called \emph{Speedy Inexact Proximal-gradient algorithm} (\emph{SIP}),
 described in Algorithm~\ref{newstrat}, which is motivated by our theoretical study and very simple to implement.
In a nutshell, it starts using only one inner iteration.
When the outer objective stops decreasing fast enough, the algorithm increases the number of internal iterations
 used for computing the subsequent proximal steps, until the objective starts decreasing fast enough again.

\begin{algorithm}
 \caption{Speedy Inexact Proximal-gradient strategy (\emph{SIP})}
 \label{newstrat}
\begin{algorithmic}
 \REQUIRE An initial point $x_0$, an update rule $\mathcal{A}_\text{out}$,
 an iterative algorithm $\mathcal{A}_\text{in}$ for computing the proximity operator, a tolerance $\text{tol} > 0$, a stopping criterion $\text{STOP}$.
\STATE $x \leftarrow x_0$, $l \leftarrow 1$

  \REPEAT
  \STATE $\hat{x} = x - \frac{1}{L} \nabla g (x)$ {\em Gradient Step}
  \STATE $z^0 \leftarrow 0$ 
	\FOR{$i=1$ to $l$} 
   		\STATE $z^i = \mathcal{A}_\text{in}(\hat{x},z^{i-1})$ {\em Proximal Step}
 	\ENDFOR
 	\STATE $\hat{x} = z^l$
 \IF{$f(x) - f(\hat{x}) < \text{tol} f(x)$}
   \STATE $l \leftarrow l+1$ {\em Increase proximal iterations}
 \ENDIF
  \STATE $x = \mathcal{A}_\text{out}(x,\hat{x})$ {\em Basic or accelerated update}
 \UNTIL{$\text{STOP}$ is met}
\end{algorithmic}
\end{algorithm}

Beyond the simplicity of the algorithm (no parameter except for the tolerance, no need to set a global accuracy in advance),
 \emph{SIP} leverages the observation that a constant number of inner iterations $l$ only allows to reach some underlying accuracy.
As long as this accuracy has not been reached, it is not necessary to put more efforts into estimating the proximity operator.
The rough idea is that far from the minimum of a convex function, moving along a rough estimation of the steepest direction will
be very likely to have the function decrease fast enough, hence the low precision required for the proximal point estimation.
On the other hand, when close to the minimum, a much higher precision is required, hence the need for using more inner iterations.  
This point of view meets the one developed in \cite{boyles11} in the context of stochastic optimization, where the authors
suggest to use increasing batch sizes (along the optimization procedure) for the stochastic estimation of the gradient of 
functional to minimize, in order to achieve computational efficiency.

\section{Numerical Simulations}
\label{proxtrad:expe}

The objective of this section is to empirically investigate the behaviour of proximal-gradient methods when the proximity operator
 is estimated via a fixed number of iterations. We also assess the performance of the proposed SIP algorithm.
Our expectation is that a strategy with just one internal iteration will be computationally optimal
 only up to a certain accuracy, after which using two internal iterations will be more efficient and so on.
We consider an image deblurring problem with total variation regularization and a semi-supervised
 learning problem using two sublinear methods for computing the proximity operator.

\subsection{TV-regularization for image deblurring}

The problem of denoising or deblurring an images is often tackled via Total Variation regularization \cite{rudin1992nonlinear,chambolle2004algorithm,beck2009tv}.
The total variation regularizer allows one to preserve sharp edges and is defined as 
\begin{equation*}
g(x) = \lambda \sum_{i,j=1}^N \|(\nabla x)_{i,j}\|_2 \;
\end{equation*}
where $\lambda > 0$ is a regularization parameter and $\nabla$ is the discrete gradient operator \cite{chambolle2004algorithm}.
We use the smooth quadratic data fit term $f(x) = \|Ax - y\|_2^2$,
where $A$ is a linear blurring operator and $y$ is the image to be deblurred.
This leads to the following problem:
$$\min_{x} \|Ax - y\|_2^2 + \lambda \sum_{i,j=1}^N \|(\nabla x)_{i,j}\|_2 .$$
Our experimental setup follows the one in \cite{villa2011aifobos}, where it was used for an asymptotic analysis.
We start with the famous Lena test image, scaled to $256 \times 256$ pixels. A $9 \times 9$ Gaussian filter with standard deviation $4$ is used to blur the image. Normal noise with zero mean and standard deviation $10^{-3}$ is also added. The regularization parameter $\lambda$ was set to $10^{-4}$. 
We run the basic proximal-gradient method up to a total computational cost of $C = 10^6$ (where we set $\Cin = \Cout = 1$) and the accelerated method up to a cost of $5 \times 10^4$. We computed the proximity operator using the algorithm of~\cite{beck2009tv}, which is a basic proximal-gradient method applied to the dual of the proximity operator problem. We used a fixed number of iterations and compared with the convergent strategy proposed in~\cite{Schmidt11} and the SIP algorithm with tolerance $10^{-8}$. As a reference for the optimal value of the objective function, we used the minimum value achieved by any method (i.e. the SIP algorithm in all cases) and reported the results in Fig.~\ref{fig:tv}.

\begin{figure}
\begin{tabular}{cc}
\includegraphics[width=0.47\textwidth]{tv_deblur_obj_vs_comp_cost_fixedK_no_acc.pdf} &
\includegraphics[width=0.47\textwidth]{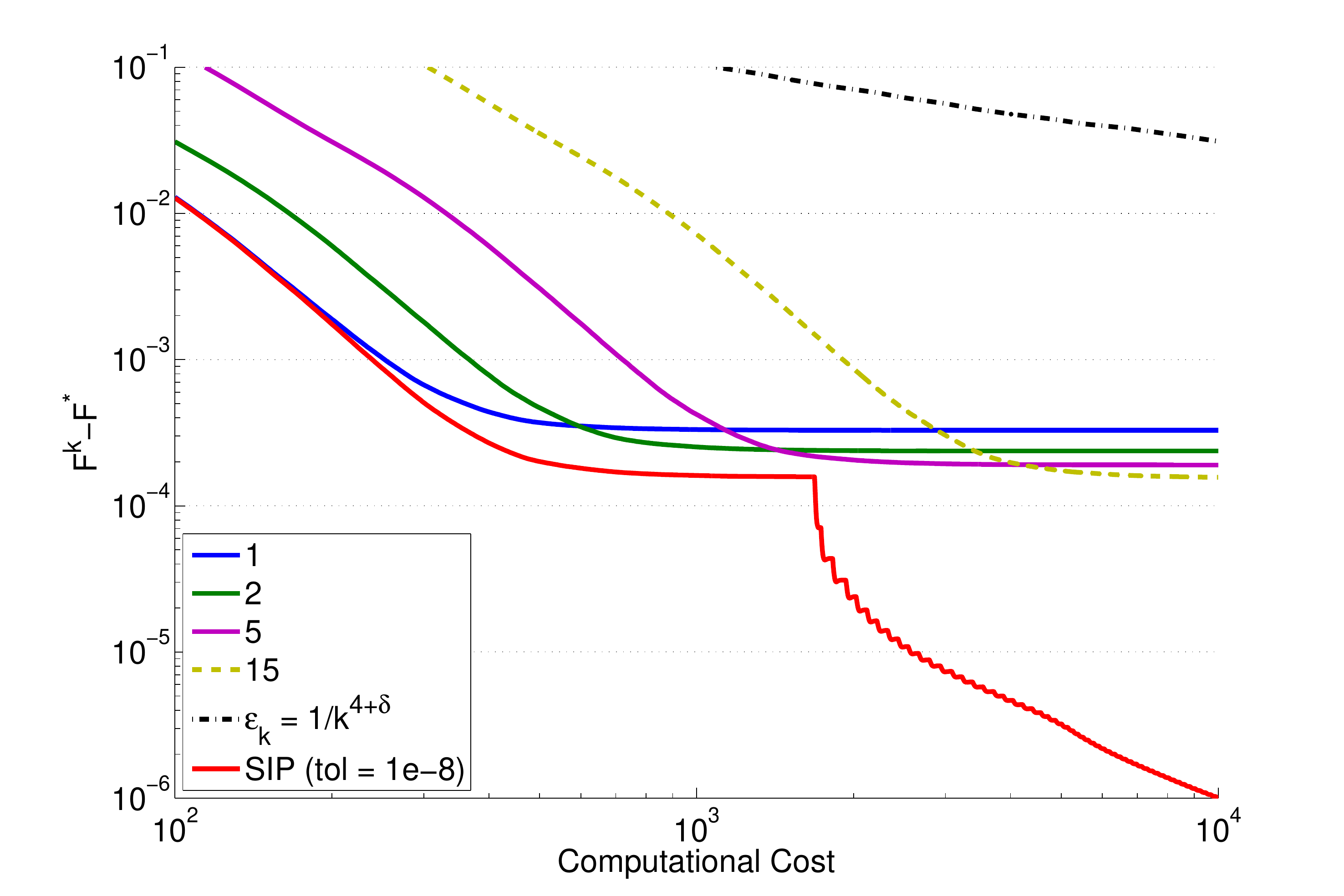}
\end{tabular}
\caption{Deblurring with Total Variation - Basic method (left) and Accelerated method (right)}
\label{fig:tv}
\end{figure}

As the figures display a similar behaviour for the different problems we ran our simulations on, we defer the analysis of
the results to \ref{subsec:why}.

\subsection{Graph prediction}
The second simulation is on the graph prediction setting of~\cite{mark09}.
It consists in a sequential prediction of boolean labels on the vertices of a graph, the learner's goal
 being the minimization of the number of mistakes.
More specifically, we consider a 1-seminorm on the space of graph labellings, which corresponds to the
 minimization of the following problem (composite $\ell_1$ norm)
$$
\min_{x} \|Ax - y\|^2 + \lambda \|Bx\|_1,
$$
where $A$ is a linear operator that selects only the vertices for which we have labels $y$, $B$ is the edge map of the graph and $\lambda > 0$ is a regularization parameter (set to $10^{-4}$).
We constructed a synthetic graph of $d = 100$ vertices, with two clusters of equal
size. The edges in each cluster were selected from a uniform draw
with probability $\frac{1}{2}$ and we explicitly connected $d/25$ 
pairs of vertices between the clusters. The labelled data $y$ were
the cluster labels ($+1$ or $-1$) of $s=10$ randomly drawn vertices.
We compute the proximity operator of $\lambda \|Bx\|_1$ via the method proposed in~\cite{combettes2010dualization},
 which essentially is a basic proximal method on the dual of the proximity operator problem.
 We follow the same experimental protocol as in the total variation problem and report the results in Fig.~\ref{fig:graph_acc}.

\begin{figure}
\begin{tabular}{cc}
\includegraphics[width=0.47\textwidth]{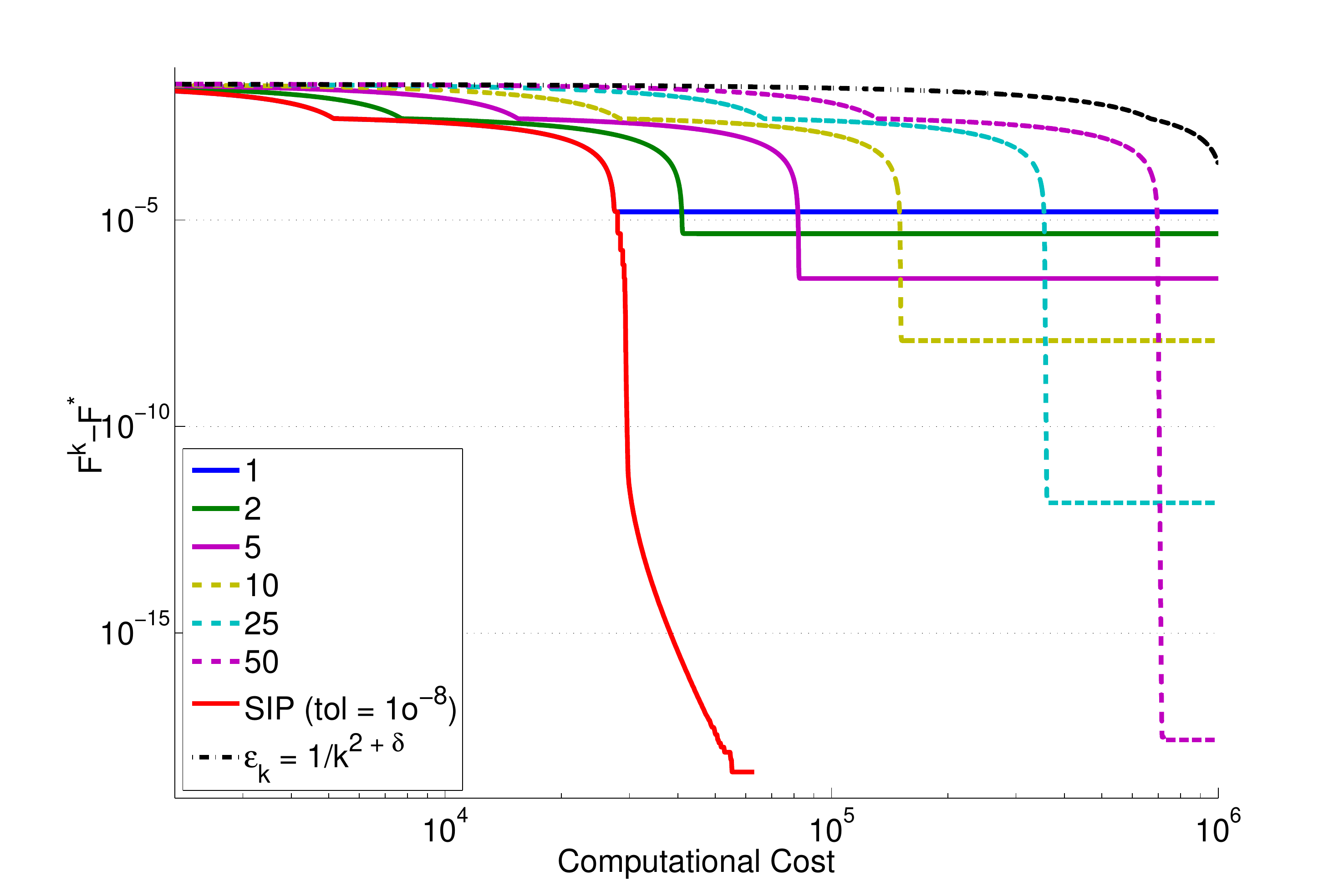} &
\includegraphics[width=0.47\textwidth]{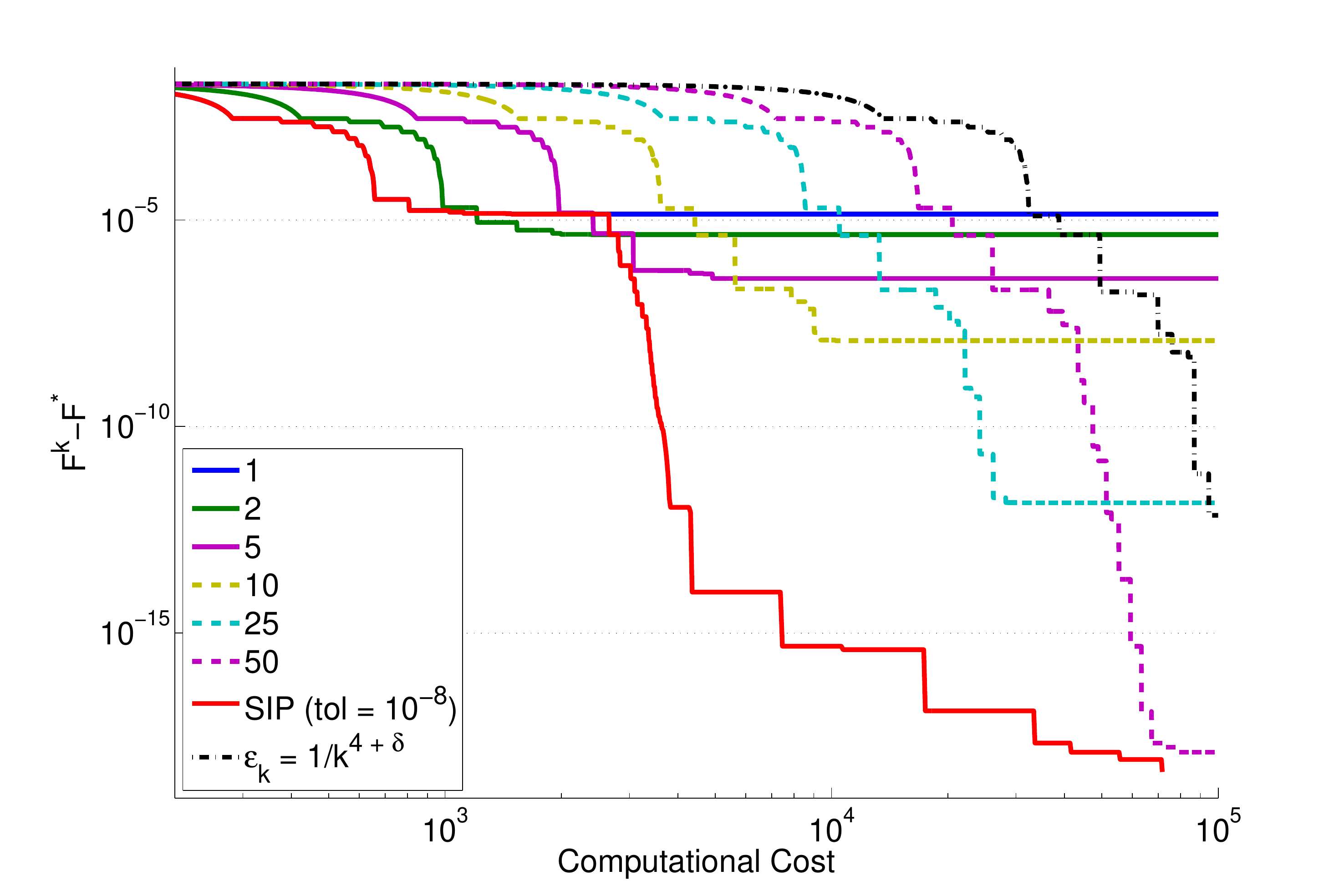}
\end{tabular}
\caption{Graph learning - Basic method (left) and Accelerated method (right)}
\label{fig:graph_acc}
\end{figure}



\subsection{Why the ``computationally optimal'' strategies are good but not that optimal}
\label{subsec:why}
On all the displayed results (Fig. \ref{fig:tv} and \ref{fig:graph_acc}),
 and as the theory predicted, we can see that for almost any given accuracy $\rho$ (i.e. $F^k - F^*$ on the figures),
 there exists some constant value for $l_i$ that yields a strategy that is potentially
 orders of magnitude more efficient than the strategy that ensures the fastest global convergence rate.
On any of the figures, comparing the curves obtained with 1 and 2 inner iterations, one may notice that the former
 first increases the precision faster than the latter. Meanwhile, the former eventually converges to a higher plateau than the latter.
This observation remains as the number of constant iterations increases.
This highlights the fact that smaller constant values of $l_i$ lead to faster algorithms at the cost of a worse global precision.
On the other hand, the \emph{SIP} strategy seems to almost always be the fastest strategy to reach any desired precision.
That makes it the most computationally efficient strategy as the figures show.
This may look surprising as the constant $l_i$'s strategies are supposed to be optimal for a specific precision and obviously are not. 

In fact, there is no contradiction with the theory: keeping $l_i$ constant leads to the optimal strategies for minimizing a bound on the real error,
 which can be significantly different than directly minimizing the error.

This remark raises crucial issues.
If the bound we use for the error was a perfect description of the real error, 
the strategies with constant $l_i$ would be the best also in practice.
Intuitively, the tighter the bounds, the closest our theoretical optimal strategy will be from the actual optimal one.
This intuition is corroborated by our numerical experiments. In our parametrization of $\epsilon_i$, in a first
 approximation, we decided to consider constant $A_i$ (see equation (\ref{epsilonbound})).
When not using warm restarts between two consecutive outer iterations,
 our model of $\epsilon_i$ does describe the actual behaviour much more accurately and our theoretical optimal strategy seems much closer to the real optimal one.
To take warm starts into account, one would need to consider decreasing sequences of $A_i$'s.
Doing so, one can notice that in the first 3 scenarios, the optimal strategies would not consist in using constant number of inner iterations any longer,
 but only constant $\epsilon_i$'s, hence maintaining the same gap between optimal rates and computationally optimal strategies.

These ideas urge for a finer understanding on how optimization algorithms behave in practice.
Our claim is that one pivotal key to design practically efficient algorithms is to have new tools
 such as warm-start analysis and, perhaps more importantly, convergence bounds that are tighter for
 specific problems (i.e. ``specific-case'' analysis rather than the usual ``worst-case'' ones).

\section{Conclusion and future work}
\label{proxtrad:ccl}

We analysed the computational cost of proximal-gradient methods when the proximity operator is computed numerically. 
Building upon the results in \cite{Schmidt11}, we proved that the optimization strategies, using a constant number
 of inner iterations, can have very
 significant impacts on computational efficiency, at the cost of obtaining only a suboptimal solution. 
Our numerical experiments showed that these strategies do exist in practice, albeit it might be difficult to access them. 
Coincidentally, those theoretical strategies meet those of actual implementations and widely-used packages
 and help us understanding both their efficiency and limitations.
We also proposed a novel optimization strategy, the SIP algorithm, that can bring large computational savings in
 practice and whose theoretical analysis needs to be further developed in future studies. 
Throughout the paper, we highlighted the fact that finite-time analysis, such as ours, urges for a better understanding of (even standard) optimization procedures. 
There is a need for sharper and problem-dependent error bounds, as well as a better theoretical analysis of warm-restart, for instance.

Finally, although we focused on inexact proximal-gradient methods, the present work was inspired by the paper ``The Trade-offs of Large-Scale Learning''~\cite{Bottou07}.
Bottou and Bousquet studied the trade-offs between computational accuracy and statistical performance of machine learning methods and advocate for
 sacrificing the rate of convergence of optimization algorithms in favour of lighter computational costs.
At a higher-level, future work naturally includes finding other situations where such trade-offs appear and analyze them using a similar methodology.
 


\section{Appendix}
\label{proxtrad:app}
\subsection*{Proof of Proposition~\ref{bosi}}

In this scenario, we use non-accelerated outer iterations and sublinear inner iterations. 
Our optimisation problem thus reads:
\begin{align*}
 \min_k \min_{\{l_i\}_{i=1}^k} &\Cin \sum_{i=1}^k l_i + k \Cout\qquad
 \text{s.t. } \frac{L}{2k}\left(\|x_0 - x^*\| + 3 \sum_{i=1}^k \sqrt{\frac{2 A_i}{L l_i^\alpha}}\right)^2 \leq \rho.
\end{align*}

Let us first examine the constraint.
\begin{align*}
 &\frac{L}{2k}\left(\|x_0 - x^*\| + 3 \sum_{i=1}^k \sqrt{\frac{2 A_i}{L l_i^\alpha}}\right)^2 \leq \rho\\
\Leftrightarrow &\|x_0 - x^*\| + 3 \sum_{i=1}^k \sqrt{\frac{2 A_i}{L l_i^\alpha}} \leq \sqrt{\frac{2 k \rho}{L}}\\
\Leftrightarrow & \sum_{i=1}^k \sqrt{\frac{A_i}{l_i^\alpha}} \leq \frac{\sqrt{L}}{3\sqrt{2}}\left(\sqrt{\frac{2 k \rho}{L}} - \|x_0 - x^*\|\right)
\end{align*}
As a first remark, this constraint can be satisfied only if $$k \geq \frac{L}{2 \rho} \|x_0 - x^*\|^2.$$
However this always holds as this only implies that the number of outer iterations $k$ is larger than the amount we would need
 if the proximity operator could be computed exactly.


Let us recall that for any $i$, $A_i$ is such that $\epsilon_i \leq {A_i}/{l_i^\alpha}$.
For most iterative optimization methods, the tightest bounds (of this form) on the error are obtained
 for constants $A_i$ depending on: a) properties of the objective function at hand, b) the initialization.
To mention an example we have already introduced, for basic proximal methods, one can choose $$A_i = \frac{L}{2 l_i} \|(x_k)_0 - x_k^*\|,$$
where $(x_k)_0$ is the initialization for our inner-problem at outer-iteration $k$ and $x_k^*$ the optimal of this problem.
As the problem seems intractable in the most general case, we will first assume that $\forall i, A_i=A$.
This only implies that we don't introduce any prior knowledge on $\|(x_k)_0 - x_k^*\|$ at each iteration.
This is reasonable if, at each outer-iteration, we randomly initialize $(x_k)_0$ but may lead to looser bounds if we use
 wiser strategies such as warm starts.

With that new assumption on $A_i$, one can state that the former constraint will hold if and only if:
$$\sum_{i=1}^k \sqrt{\frac{1}{l_i^\alpha}} \leq \frac{\sqrt{L}}{3\sqrt{2 A}}\left(\sqrt{\frac{2 k \rho}{L}} - \|x_0 - x^*\|\right).$$
Let us first solve the problem of finding the $\{l_i\}_{i=1}^k$ for some fixed $k$.
We need to solve:
\begin{align*}
 \argmin_{\{l_i\}_{i=1}^k\in {\mathbb{N}^*}^k} &\Cin \sum_{i=1}^k l_i + k \Cout\qquad
 \text{s.t. } \sum_{i=1}^k \sqrt{\frac{1}{l_i^\alpha}} \leq \frac{\sqrt{L}}{3\sqrt{2 A}}\left(\sqrt{\frac{2 k \rho}{L}} - \|x_0 - x^*\|\right):=C_k,
\end{align*}
which is equivalent to solving:
\begin{align*}
 \argmin_{\{l_i\}_{i=1}^k\in {\mathbb{N}^*}^k} &\sum_{i=1}^k l_i\qquad
 \text{s.t. } \sum_{i=1}^k \sqrt{\frac{1}{l_i^\alpha}} \leq C_k.
\end{align*}
\begin{remark}
\label{const_unconst}
 $l_i \in {\mathbb{N}^*}^k \Rightarrow \sqrt{\frac{1}{l_i^\alpha}} \in ]0,1] \Rightarrow \sum_{i=1}^k \sqrt{\frac{1}{l_i^\alpha}} \leq k$.
So, if $C_k \geq k$, then the solution of the constrained problem is the solution of the unconstrained problem.
In that case, the trivial solution is $l_i = 1, \forall i$.
Moreover, if $l_i = 1, \forall i$ is the solution of the constrained problem, then $\sum_{i=1}^k \sqrt{\frac{1}{l_i^\alpha}} = k \leq C_k$.
As a consequence, the solution of the unconstrained problem is the solution of the constrained problem \emph{if and only if} $C_k \geq k$.
\end{remark}

We then have two cases to consider:
\paragraph*{Case 1: $C_k \geq k$}
As stated before, the optimum will be trivially reached for $l_i = 1, \forall i$.
Now, we need to find the optimal over $k$.
It consists in finding:
\begin{align*}
 \min_{k\in \mathbb{N}^*} &~k (\Cin + \Cout)\qquad
\text{s.t. }  C_k \geq k.
\end{align*}
Let us have a look at the constraint.
\begin{align*}
 C_k \geq k &\Leftrightarrow \frac{\sqrt{L}}{3\sqrt{2 A}}\left(\sqrt{\frac{2 k \rho}{L}} - \|x_0 - x^*\|\right) \geq k\\
&\Leftrightarrow \sqrt{\frac{2 k \rho}{L}} \geq \frac{3\sqrt{2 A}}{\sqrt{L}} k +  \|x_0 - x^*\|\\
&\Leftrightarrow \left( \sqrt{k} - \frac{\sqrt{\rho}}{6 \sqrt{A}}\right)^2 \leq \frac{\rho}{36 A} - \frac{\sqrt{L}\|x_0 - x^*\|}{3 \sqrt{2 A}}
\end{align*}
Then:
\begin{itemize}
 \item if $\frac{\rho}{36 A} <  \frac{\sqrt{L}\|x_0 - x^*\|}{3 \sqrt{2 A}}$ then there is no solution (i.e. $C_k < k, \forall k$).
 \item if $\frac{\rho}{36 A} \geq  \frac{\sqrt{L}\|x_0 - x^*\|}{3 \sqrt{2 A}}$ then, the constraint holds for 
 $$k \in \bigg[\Big(\frac{\sqrt{\rho}}{6 \sqrt{A}} - \sqrt{\frac{\rho}{36 A} - \frac{\sqrt{L}\|x_0 - x^*\|}{3 \sqrt{2 A}}}\Big)^2 ,
 \Big(\frac{\sqrt{\rho}}{6 \sqrt{A}} + \sqrt{\frac{\rho}{36 A} - \frac{\sqrt{L}\|x_0 - x^*\|}{3 \sqrt{2 A}}}\Big)^2\bigg]$$.
 The optimum will then be achieved for the smallest integer (if exists) larger than
 $\Big(\frac{\sqrt{\rho}}{6 \sqrt{A}} - \sqrt{\frac{\rho}{36 A} - \frac{\sqrt{L}\|x_0 - x^*\|}{3 \sqrt{2 A}}}\Big)^2$
 and smaller than 
  $\Big(\frac{\sqrt{\rho}}{6 \sqrt{A}} + \sqrt{\frac{\rho}{36 A} - \frac{\sqrt{L}\|x_0 - x^*\|}{3 \sqrt{2 A}}}\Big)^2$.
\end{itemize}

\paragraph*{Case 2: $C_k \leq k$}
As remark~\ref{const_unconst} shows, the solution of the constrained problem is different from the unconstrained one.
The solution of this \emph{integer} optimization problem is hard to compute. 
In a first step, we may relax the problem and solve it as if $\{l_i\}_{i=1}^k$ were continuous variables taking values
into $[1,+\infty[^k$.
Because both our objective function and the constraints are continuous with respect to $\{l_i\}_{i=1}^k$,
 the optimal (over $\{l_i\}_{i=1}^k$) of our problem will precisely lie on the constraint.
Our problem now is:
\begin{align*}
 \argmin_{\{l_i\}_{i=1}^k\in [1,+\infty[^k} &\sum_{i=1}^k l_i\qquad
 \text{s.t. } \sum_{i=1}^k l_i^{-\frac{\alpha}{2}} = C_k.
\end{align*}

For any $i \in [1,k]$, let $n_i :=  l_i^{-\frac{\alpha}{2}}$.
Our problem becomes:
\begin{align*}
 \argmin_{\{n_i\}_{i=1}^k\in ]0,1]^k} &\sum_{i=1}^k n_i^{-\frac{2}{\alpha}}\qquad
 \text{s.t. } \sum_{i=1}^k n_i = C_k.
\end{align*}

Introducing the Lagrange multiplier $\lambda \in \mathbb{R}$, the Lagrangian of this problem writes:
\begin{align*}
 L(\{n_i\}_{i=1}^k,\lambda) := \sum_{i=1}^k n_i^{-\frac{2}{\alpha}} + \lambda \left(\sum_{i=1}^k n_i - C_k\right).
\end{align*}
 
And it follows that, $\forall i \in [1,k]$, when the optimum $\{n_i^*\}_{i=1}^k$ is reached:
\begin{align*}
 \frac{\partial L}{\partial n_i} = 0 \Leftrightarrow n_i^* = \left(\frac{\alpha \lambda}{2}\right)^{\frac{1}{-\frac{2}{\alpha}-1}}
\end{align*}

And now, plugging into our constraint:
\begin{align*}
 \sum_{i=1}^k n_i^* = C_k \Rightarrow \lambda = \frac{2}{\alpha} \left(\frac{C_k}{k}\right)^{-\frac{2}{\alpha}-1}.
\end{align*}

Hence, for any $i \in [1,k]$, $n_i^* = \frac{C_k}{k}$.

As $C_k \leq k$, it is clear that $\forall p, n_p^* \in ]0,1]$ and we have, $\forall i, l_i^* = \left(\frac{C_k}{k}\right)^{-\frac{2}{\alpha}}$.

We can now plug the optimal $l_i^*$ in our first problem and we now need to find the optimal $k^*$ such that:
\begin{align*}
 k^* &= \argmin_{k \in \mathbb{N}^*} \Cglob(k,\{l_i^*\}_{i=1}^k)\\
 &= \argmin_{k \in \mathbb{N}^*} \Cin \sum_{i=1}^k l_i^* + k \Cout\\
 &= \argmin_{k \in \mathbb{N}^*} \Cin \sum_{i=1}^k \left(\frac{C_k}{k}\right)^{-\frac{2}{\alpha}} + k \Cout\\
&= \argmin_{k \in \mathbb{N}^*} k \left( \Cin \Big(\frac{C_k}{k}\Big)^{-\frac{2}{\alpha}} + \Cout\right).
\end{align*}

Once again, we can relax this integer optimization problem into a continuous one, assuming $k \in \mathbb{R}^+$.
It directly follows that the solution of that relaxed problem is reached when the derivative
 (w.r.t. $k$) of $\Cglob(k,\{l_i^*\}_{i=1}^k)$ equals $0$.
The derivative can be easily computed:
\begin{align*}
 \frac{\partial \Cglob(k,\{l_i^*\}_{i=1}^k)}{\partial k} = \Cin \left(\Big( \frac{2}{\alpha} + 1 \Big) k^{\frac{2}{\alpha}} C_k^{-\frac{2}{\alpha}}
 -\frac{2}{\alpha} C_k' C_k^{-\frac{2}{\alpha}-1} k^{\frac{2}{\alpha}+1}\right) + \Cout,
\end{align*}
where $C_k'$ is the derivative of $C_k$ w.r.t. $k$:
$$C_k' = \frac{\sqrt{\rho}}{3 \sqrt{A}} k^{-\frac{1}{2}}.$$
However, giving an analytic form of that zero is difficult.
But using any numeric solver, it is very easy to find a very good approximation of $k^*$.
As described in Remark \ref{rem:kint}, this allows us to exactly retrieve the exact integer minimizer.


\subsection*{Proof of Proposition~\ref{boli}}

In this scenario, we use non-accelerated outer iterations and linear inner iterations. 
Our optimisation problem thus reads:
\begin{align*}
 \min_k \min_{\{l_i\}_{i=1}^k} &\Cin \sum_{i=1}^k l_i + k \Cout\qquad
 \text{s.t. } \frac{L}{2k}\left(\|x_0 - x^*\| + 3 \sum_{i=1}^k \sqrt{\frac{2 A_i(1-\gamma)^{l_i}}{L}}\right)^2 \leq \rho.
\end{align*}

We consider $A_i=A$. The error in the $i$th inner iteration reads:
\begin{align}
\label{epsilonboundlin}
\epsilon_i = A (1-\gamma)^{l_i}.
\end{align}

Hence the corresponding bound on the error:
\begin{align}
 \rho_k \leq \frac{L}{2k}\left(\|x_0 - x^*\| + 3 \sum_{i=1}^k \sqrt{\frac{2 A (1-\gamma)^{l_i}}{L}}\right)^2.
\end{align}

Problem in $\{l_i\}$ boils down to:
\begin{align*}
 \argmin_{\{l_i\}_{i=1}^k\in {\mathbb{N}^*}^k} &\sum_{i=1}^k l_i\qquad
 \text{s.t. } \sum_{i=1}^k (1-\gamma)^{\frac{l_i}{2}} \leq C_k,
\end{align*}
still with $C_k = \frac{\sqrt{L}}{3\sqrt{2 A}}\left(\sqrt{\frac{2 k \rho}{L}} - \|x_0 - x^*\|\right)$.

\paragraph*{Case 1: $C_k \geq k \sqrt{1-\gamma}$}
identical except for the threshold, which will also impact the interval for $k^*$.

\paragraph*{Case 2: $C_k \leq k \sqrt{1-\gamma}$}
For any $i \in [1,k]$, let $n_i :=  (1-\gamma)^{\frac{l_i}{2}}$.
Our problem becomes:
\begin{align*}
 \argmin_{\{n_i\}_{i=1}^k\in ]0,\sqrt{1-\gamma}]^k} &-\sum_{i=1}^k \ln n_i\qquad
 \text{s.t. } \sum_{i=1}^k n_i = C_k.
\end{align*}

Writing again the Lagrangian of this new problem, we obtain the same result: for any $i \in [1,k]$, $$n_i^* = \frac{C_k}{k}.$$
This leads to $$l_i^* = \frac{2 \ln \left(\frac{C_k}{k}\right)}{\ln(1-\gamma)}.$$

Following the same reasoning, we now plug this analytic solution of the first optimization problem into the second one.
This leads to:
\begin{align*}
 k^* = \argmin_{k \in \mathbb{N}^*}  k \left(\frac{2 \Cin}{\ln(1-\gamma)} \ln \Big(\frac{C_k}{k}\Big) + \Cout\right)\\
\end{align*}

This time, the derivative  of the continuous relaxation writes:
\begin{align*}
 \frac{\partial \Cglob(k,\{l_i^*\}_{i=1}^k)}{\partial k} = \frac{2 \Cin}{\ln(1-\gamma)} \left(\ln\frac{C_k}{k} + \frac{k C_k'}{C_k} - 1\right) + \Cout,
\end{align*}
where $C_k'$ is the derivative of $C_k$ w.r.t. $k$:
$$C_k' = \frac{\sqrt{\rho}}{3 \sqrt{A}} k^{-\frac{1}{2}}.$$

The optimum $k^*$ of our problem is the (unique) zero of that derivative.

\subsection*{Proof of Proposition~\ref{aosi}}

In this scenario, we use accelerated outer iterations and sublinear inner iterations. 
Our optimisation problem thus reads:
\begin{align*}
 \min_k \min_{\{l_i\}_{i=1}^k} &\Cin \sum_{i=1}^k l_i + k \Cout\qquad
 \text{s.t. } \frac{L}{(k+1)^2}\left(\|x_0 - x^*\| + 3 \sum_{i=1}^k i\sqrt{\frac{2 A_i}{Ll_i^\alpha}}\right)^2 \leq \rho.
\end{align*}

We consider $A_i=A$. The error in the $i$th inner iteration reads:
\begin{align}
\epsilon_i = \frac{A}{l_i^{\alpha}}.
\end{align}

Similarly, for the accelerated case, we have:
\begin{align}
 \rho_k \leq \frac{2 L}{(k+1)^2}\left(\|x_0 - x^*\| + 3 \sum_{i=1}^k i \sqrt{\frac{2 A_i}{L l_i^\alpha}}\right)^2.
\end{align}

Those problems can naturally be extended with the use of accelerated schemes and we get this ``error-oriented'' problem:
\begin{align*}
 \min_{k,\{l_i\}_{i=1}^k} &\Cin \sum_{i=1}^k l_i + k \Cout\qquad
 \text{s.t. } \frac{2 L}{(k+1)^2}\left(\|x_0 - x^*\| + 3 \sum_{i=1}^k i \sqrt{\frac{2 A_i}{L l_i^\alpha}}\right)^2 \leq \rho.
\end{align*}

We will follow the same reasoning as for the non-accelerated case.
We will consider this optimization problem:
\begin{align*}
 \min_k \min_{\{l_i\}_{i=1}^k} &\Cin \sum_{i=1}^k l_i + k \Cout\qquad
 \text{s.t. } \frac{2 L}{(k+1)^2}\left(\|x_0 - x^*\| + 3 \sum_{i=1}^k i \sqrt{\frac{2 A_i}{L l_i^\alpha}}\right)^2 \leq \rho.
\end{align*}
Let us first have a look at the constraint.
\begin{align*}
 &\frac{2 L}{(k+1)^2}\left(\|x_0 - x^*\| + 3 \sum_{i=1}^k i \sqrt{\frac{2 A_i}{L l_i^\alpha}}\right)^2 \leq \rho\\
\Leftrightarrow &\|x_0 - x^*\| + 3 \sum_{i=1}^k i \sqrt{\frac{2 A_i}{L l_i^\alpha}} \leq \sqrt{\frac{\rho}{2 L}} (k+1)\\
\Leftrightarrow & \sum_{i=1}^k i \sqrt{\frac{A_i}{l_i^\alpha}} \leq \frac{\sqrt{L}}{3\sqrt{2}}\left(\sqrt{\frac{\rho}{2 L}} (k+1) - \|x_0 - x^*\|\right)
\end{align*}
As in the former case, this can only hold if $(k+1) \geq \sqrt{\frac{2 L}{\rho}}\|x_0 - x^*\|$ which is trivial.

We will now assume again that $A_i = A$ for any $i$.
As earlier, we first solve the following problem in $ \{l_i\}_{i=1}^k$:
\begin{align*}
 \argmin_{\{l_i\}_{i=1}^k\in {\mathbb{N}^*}^k} &\sum_{i=1}^k l_i\qquad
 \text{s.t. } \sum_{i=1}^k i \sqrt{\frac{1}{l_i^\alpha}} \leq \frac{\sqrt{L}}{3\sqrt{2 A}}\left(\sqrt{\frac{\rho}{ 2L}} (k+1) - \|x_0 - x^*\|\right)=:D_k.
\end{align*}
\begin{remark}
\label{const_unconst_acc}
 $l_i \in {\mathbb{N}^*}^k \Rightarrow \sqrt{\frac{1}{l_i^\alpha}} \in ]0,1] \Rightarrow \sum_{i=1}^k i \sqrt{\frac{1}{l_i^\alpha}} \leq \frac{k(k+1)}{2}$.
So, if $D_k \geq \frac{k(k+1)}{2}$, then the solution of the constrained problem is the solution of the unconstrained problem.
In that case, the trivial solution is $l_i = 1, \forall i$.
Moreover, if $l_i = 1, \forall i$ is the solution of the constrained problem, then $\sum_{i=1}^k i \sqrt{\frac{1}{l_i^\alpha}} = \frac{k(k+1)}{2} \leq D_k$.
As a consequence, the solution of the unconstrained problem is the solution of the constrained problem \emph{if and only if} $D_k \geq \frac{k(k+1)}{2}$.
\end{remark}

\paragraph*{Case 1: $D_k \geq \frac{k(k+1)}{2}$}
As stated before, the optimum will be trivially reached for $l_i = 1, \forall i$.
Now, we need to find the optimal over $k$.
It consists in finding:
\begin{align*}
 \min_{k\in \mathbb{N}^*} &~k (\Cin + \Cout)\qquad
\text{s.t. } D_k \geq \frac{k(k+1)}{2}.
\end{align*}
Let us have a look at this constraint.
\begin{align*}
 D_k \geq \frac{k(k+1)}{2} &\Leftrightarrow \frac{\sqrt{L}}{3\sqrt{2 A}}\left(\sqrt{\frac{\rho}{ 2L}} (k+1) - \|x_0 - x^*\|\right) \geq \frac{k(k+1)}{2}\\
&\Leftrightarrow k^2 + k \left(1- \frac{\sqrt{\rho}}{3 \sqrt{A}}\right) \leq \frac{\sqrt{\rho}}{3 \sqrt{A}} - \frac{\sqrt{2L}}{3\sqrt{A}} \|x_0 - x^*\|\\
&\Leftrightarrow \left(k + \frac{1}{2} \Big(1- \frac{\sqrt{\rho}}{3 \sqrt{A}}\Big)\right)^2 \leq - \frac{\sqrt{2L}}{3\sqrt{A}} \|x_0 - x^*\|
 + \frac{1}{4} \Big(1 + \frac{\sqrt{\rho}}{3 \sqrt{A}}\Big)^2=:K.
\end{align*}
Then:
\begin{itemize}
 \item if $K < 0$ then there is no solution (i.e. $D_k < \frac{k(k+1)}{2}, \forall k$).
 \item if $K \geq 0$ then, the constraint holds for 
 $k \in \left[\frac{1}{2} \Big(\frac{\sqrt{\rho}}{3 \sqrt{A}} - 1\Big) - \sqrt{K}, \frac{1}{2} \Big(\frac{\sqrt{\rho}}{3 \sqrt{A}} - 1\Big) + \sqrt{K}\right]$.
 The optimum will then be achieved for the smallest integer (if exists) larger than
 $\frac{1}{2} \Big(\frac{\sqrt{\rho}}{3 \sqrt{A}} - 1\Big) - \sqrt{K}$
 and smaller than 
  $\frac{1}{2} \Big(\frac{\sqrt{\rho}}{3 \sqrt{A}} - 1\Big) + \sqrt{K}$.
\end{itemize}

\paragraph*{Case 2: $D_k \leq \frac{k(k+1)}{2}$}
Once again, we fall in the same scenario as in the non-accelerated case.
The solution of our problem is different from the unconstrained one and we can relax our discrete optimization problem to a continuous one.
The optimal then precisely lies again on the constraint.
We now have:
\begin{align*}
 \min_{\{l_i\}_{i=1}^k} &\sum_{i=1}^k l_i \qquad
 \text{s.t. } \sum_{i=1}^k i \sqrt{\frac{1}{l_i^\alpha}} = D_k.
\end{align*}

For any $i \in [1,k]$, let $n_i :=  i l_i^{-\frac{\alpha}{2}}$.
Our problem becomes:
\begin{align*}
 \min_{\{n_i\}_{i=1}^k} &\sum_{i=1}^k \left(\frac{n_i}{i}\right)^{-\frac{2}{\alpha}} \qquad
 \text{s.t. } \sum_{i=1}^k n_i = D_k.
\end{align*}

The Lagrangian writes:
\begin{align*}
 L(\{n_i\}_{i=1}^k,\lambda) := \sum_{i=1}^k \left(\frac{n_i}{i}\right)^{-\frac{2}{\alpha}} + \lambda \left(\sum_{i=1}^k n_i - D_k\right).
\end{align*}
 
And it follows that, $\forall i \in [1,k]$, when the optimum $\{n_i^*\}_{i=1}^k$ is reached:
\begin{align*}
 \frac{\partial L}{\partial n_i} = 0 \Leftrightarrow n_i^* = i \left(\frac{\alpha \lambda}{2}\right)^{\frac{1}{-\frac{2}{\alpha}-1}}
\end{align*}

And now, plugging into our constraint:
\begin{align*}
 \sum_{i=1}^k n_i^* = D_k \Rightarrow \lambda = \frac{2}{\alpha} \left(\frac{2 D_k}{k(k+1)}\right)^{-\frac{2}{\alpha}-1}.
\end{align*}

Hence, for any $i \in [1,k]$, $n_i^* = \frac{2 D_k}{k(k+1)} i$, giving the corresponding
 $l_i^* = \left(\frac{2 D_k}{k(k+1)}\right)^{-\frac{2}{\alpha}}$.

We can now plug the optimal $l_i^*$ in our first problem and we now need to find the optimal $k^*$ such that:
\begin{align*}
 k^* &= \argmin_{k \in \mathbb{N}^*} \Cglob(k,\{l_i^*\}_{i=1}^k).\\
 &= \argmin_{k \in \mathbb{N}^*} k \left(\Cin \Big(\frac{2 D_k}{k(k+1)}\Big)^{-\frac{2}{\alpha}} + \Cout\right).
\end{align*}

Once again, we can relax this integer optimization problem into a continuous one, assuming $k \in \mathbb{R}^+$.
It directly follows that the solution of that relaxed problem is reached when the derivative
 (w.r.t. $k$) of $\Cglob(k,\{l_i^*\}_{i=1}^k)$ equals $0$.

\subsection*{Proof of Proposition~\ref{aoli}}

In this scenario, we use accelerated outer iterations and linear inner iterations. 
Our optimisation problem thus reads:
\begin{align*}
 \min_k \min_{\{l_i\}_{i=1}^k} &\Cin \sum_{i=1}^k l_i + k \Cout\qquad
 \text{s.t. } \frac{L}{(k+1)^2}\left(\|x_0 - x^*\| + 3 \sum_{i=1}^k i\sqrt{\frac{2 A_i(1-\gamma)^{l_i}}{L}}\right)^2 \leq \rho.
\end{align*}

We consider $A_i=A$. The error in the $i$th inner iteration reads:
\begin{align}
\epsilon_i = A{(1-\gamma)}^l_i.
\end{align}

The problem in $\{l_i\}$ boils down to:
\begin{align}
 \argmin_{\{l_i\}_{i=1}^k\in {\mathbb{N}^*}^k} &\sum_{i=1}^k l_i\qquad
 \text{s.t. } \sum_{i=1}^k i (1-\gamma)^{\frac{l_i}{2}} \leq D_k,
 \label{eq:pb_acc_lin_integer}
\end{align}
with $D_k = \frac{\sqrt{L}}{3\sqrt{2 A}}\left(\sqrt{\frac{\rho}{ 2L}} (k+1) - \|x_0 - x^*\|\right)$.

\paragraph*{Case 1: $D_k \geq \frac{k(k+1)}{2} \sqrt{1-\gamma}$}
identical except for the threshold, which will also impact the interval for $k^*$.

\paragraph*{Case 2: $D_k \leq \frac{k(k+1)}{2} \sqrt{1-\gamma}$\\}
Relaxing Problem~\eqref{eq:pb_acc_lin_integer} to real numbers, we want to solve:
\begin{align}
 \argmin_{\{l_i\}_{i=1}^k\in \mathbb{R^+}^k}  \sum_{i=1}^k l_i\qquad\text{s.t. } 
 & \sum_{i=1}^k i (1-\gamma)^{\frac{l_i}{2}} - D_k \leq 0\\
 & 1-l_i\leq 0, \forall i.
 \label{eq:pb_acc_lin_real}
\end{align}

According to the KKT conditions, there exist $\{\mu_i\},\ i=1,..,k$ and $\lambda$, such that the optimum $\{l_i^*\}$ verify:
\begin{align}
 \text{(stationarity) } &   1+\lambda i  (1-\gamma)^{\frac{l_i^*}{2}} \ln(\sqrt{1-\gamma}) -\mu_i = 0, \quad \forall i=1,..,k \label{eq:KKT_stat}\\
 \text{(primal feasibility) } & \sum_{i=1}^k i (1-\gamma)^{\frac{l_i^*}{2}} - D_k \leq 0,
 \label{eq:KKT_prim}\\
 &1- l_i^* \leq 0, \quad \forall i=1,..,k, \label{eq:KKT_prim_i}\\
 \text{(dual feasibility) } &\lambda \geq 0, \label{eq:KKT_dual}\\
 &\mu_i \geq 0, \quad \forall i=1,..,k, \label{eq:KKT_dual_i}\\
 \text{(complementary slackness) } 
 &\lambda (\sum_{i=1}^k i(1-\gamma)^{\frac{l_i^*}{2}} - D_k )=0 , \label{eq:KKT_comp}\\
 &\mu_i (1-l_i^*) = 0, \quad \forall i=1,..,k. \label{eq:KKT_comp_i}
\end{align}

Eq.~\eqref{eq:KKT_dual} yields two cases: $\lambda=0$ or  $\lambda>0$.

\paragraph*{$\lambda = 0$ }
Then Eq.~\eqref{eq:KKT_stat} yields $\mu_i=1, \forall i$ thus Eq.\eqref{eq:KKT_comp_i} implies $l_i^*=1$. All the KKT conditions are thus fulfilled if Eq.\eqref{eq:KKT_prim} is, i.e. if
$$ D_k \geq \frac{k(k+1)}{2} \sqrt{1-\gamma}.$$

We work here in the case where $D_k \leq \frac{k(k+1)}{2} \sqrt{1-\gamma}$ thus this solution is valid if and only if  $D_k = \frac{k(k+1)}{2} \sqrt{1-\gamma}$.

\paragraph*{$\lambda > 0$ }

Again, Eq.~\eqref{eq:KKT_dual} yields two cases: $\mu_i=0$ or  $\mu_i>0$.

\paragraph*{Subcase 1: $\mu_i>0$\\}

Then by Eq.~\eqref{eq:KKT_comp_i}, we have $l_i^*=1$ and by~\eqref{eq:KKT_stat} $\mu_i = 1 + \lambda i \sqrt{1-\gamma} \ln(\sqrt{1-\gamma})$. Then $\mu_i > 0$ implies:
$$ i <{\frac{1}{\lambda\sqrt{1-\gamma} \ln(\sqrt{\tfrac{1}{1-\gamma}})} }.$$

\paragraph*{Subcase 2: $\mu_i=0$\\}

Then by Eq.~\eqref{eq:KKT_stat} we have $ 1 + \lambda i (1-\gamma)^{\frac{l_i^*}{2}} \ln(\sqrt{1-\gamma}) = 0$, i.e:

$$ l_i^*  = \frac{ \ln\left(i\lambda \ln(\sqrt{\tfrac{1}{1-\gamma}}) \right)}{\ln(\sqrt{\tfrac{1}{1-\gamma}})}.$$ 

Since Eq.~\eqref{eq:KKT_prim_i} enforces $l_i^*\leq 1$, we have:
$$i \geq{\frac{1}{\lambda\sqrt{1-\gamma} \ln(\sqrt{\tfrac{1}{1-\gamma}})} } .$$

\paragraph*{Conclusion:} For $\lambda>0$, Eq.~\eqref{eq:KKT_stat}, \eqref{eq:KKT_prim_i}, \eqref{eq:KKT_dual}, \eqref{eq:KKT_dual_i} and \eqref{eq:KKT_comp_i} are fullfilled all at once if we set:
\begin{equation}
  \begin{array}{lll}
  \text{For } i=1..\lceil\frac{1}{\lambda\sqrt{1-\gamma} \ln(\sqrt{\tfrac{1}{1-\gamma}})} \rceil -1: & l_i=1 & \mu_i =  1 + \lambda i \sqrt{1-\gamma} \ln(\sqrt{1-\gamma}) \\
 \text{For } i=\lceil\frac{1}{\lambda\sqrt{1-\gamma} \ln(\sqrt{\tfrac{1}{1-\gamma}})} \rceil ,..,k: & l_i= \frac{ \ln\left(i\lambda \ln(\sqrt{\tfrac{1}{1-\gamma}}) \right)}{\ln(\sqrt{\tfrac{1}{1-\gamma}})}& \mu_i = 0.
  \end{array} 
\end{equation}

With these values set for $\mu_i$ and $l_i^*$, let us now find the value of $\lambda$.

\paragraph*{Computing $\lambda$\\}

We need to fulfill Eq.~\eqref{eq:KKT_prim} and \eqref{eq:KKT_comp}.

Let us define $ M(\lambda) = \lceil\frac{1}{\lambda\sqrt{1-\gamma} \ln(\sqrt{\tfrac{1}{1-\gamma}})} \rceil$. \\
Note that for $\lambda>\frac{1}{(k+1)\lambda\sqrt{1-\gamma} \ln(\sqrt{\tfrac{1}{1-\gamma}})}$, we have: $0< M(\lambda)\leq k+1$, and:
\begin{itemize}
  \item $ M(\lambda) = 1 \Leftrightarrow \lambda \geq\frac{1}{\lambda\sqrt{1-\gamma} \ln(\sqrt{\tfrac{1}{1-\gamma}})} $
  \item $ M(\lambda) = n \Leftrightarrow \frac{1}{n\lambda\sqrt{1-\gamma} \ln(\sqrt{\tfrac{1}{1-\gamma}})} <\lambda <\frac{1}{(n-1)\lambda\sqrt{1-\gamma} \ln(\sqrt{\tfrac{1}{1-\gamma}})} $ for $n=2,..,k+1$.
\end{itemize}

Eq.~\eqref{eq:KKT_prim} and \eqref{eq:KKT_comp} are true if and only if
\begin{align*}
D_k &= \sum_{i=1}^k i  (1-\gamma)^{\frac{l_i^*}{2}} \\
D_k &= \frac{M(\lambda)(M(\lambda)-1)}{2} \sqrt{1-\gamma} + \frac{k-M(\lambda)+1}{\lambda\ln(\sqrt{\tfrac{1}{1-\gamma}})}.
\end{align*}

We define $F:\mathbb{R}^{+*}\to\mathbb{R} $ by $ F(\lambda) = \frac{M(\lambda)(M(\lambda)-1)}{2} \sqrt{1-\gamma} + \frac{k-M(\lambda)+1}{\lambda\ln(\sqrt{\tfrac{1}{1-\gamma}})}$.

Examining $F$ on each interval where $M$ is constant, it is easy to see that $F$ is continuous and non-increasing. Moreover $F$ decreases strictly on $[\frac{1}{k\lambda\sqrt{1-\gamma} \ln(\sqrt{\tfrac{1}{1-\gamma}})} ,\infty)$, $\lim_{\lambda\to\infty} F = 0$ and $F$ reaches its highest value $\max{F} = \frac{k(k+1)}{2} \sqrt{1-\gamma}$ on  $[\frac{1}{(k+1)\lambda\sqrt{1-\gamma} \ln(\sqrt{\tfrac{1}{1-\gamma}})} ,\frac{1}{k\lambda\sqrt{1-\gamma} \ln(\sqrt{\tfrac{1}{1-\gamma}})}] $.

We thus have for all $D_k$ such that $0<D_k < \frac{k(k+1)}{2} \sqrt{1-\gamma}$, there exists a unique $\lambda$ such that $F(\lambda) = D_k$ and thus all KKT conditions are fullfilled.

To find this value of $\lambda$ as a function of $D_k$, we first find $M(\lambda)$ from $D_k$. Notice that 
$$ F\left(\frac{1}{n\lambda\sqrt{1-\gamma} \ln(\sqrt{\tfrac{1}{1-\gamma}})} \right)= \frac{n(2k+1-n)}{2} \sqrt{1-\gamma}.$$
 
As $D_k< \frac{k(k+1)}{2} \sqrt{1-\gamma}$, there exists a unique integer $n$ in $1,..,k$ such that
\begin{equation}
 \frac{(n-1)(2k+2-n)}{2} \sqrt{1-\gamma} \leq D_k <  \frac{n(2k+1-n)}{2} \sqrt{1-\gamma}.
 \label{eq:nk}
\end{equation}
Then $M(\lambda)=n$ and the KKT conditions are all fulfilled for:
$$\lambda = \frac{k+1-n}{(D_k-\frac{n(n-1)}{2}\sqrt{1-\gamma}) \ln(\sqrt{\tfrac{1}{1-\gamma}})}.$$
In particular:
\begin{equation}
  \begin{array}{ll}
  \text{For } i=1,..,n-1: & l_i=1. \\
 \text{For } i=n,..,k: & l_i= \frac{ \ln\left(\frac{k+1-n}{D_k-\frac{n(n-1)}{2}\sqrt{1-\gamma}}\right) }{\ln(\sqrt{\frac{1}{1-\gamma}})}
  \end{array} 
\end{equation}

\paragraph*{Back to the global problem}

We now seek to find the the value $k^*$ that minimizes the global problem. Outisde of the interval defined in \emph{Case 1}, the global cost is defined by the following.
Let us define $n(k)$ as the integer verifying Eq.~\eqref{eq:nk}. Then

\begin{align*}
 C_{glob}(k) = kC_{out} &+ C_{in} (n(k)-1) + \frac{C_{in}(k-n(k)+1)}{ \ln(\sqrt{\tfrac{1}{1-\gamma}})} \ln\left(\frac{k+1-n(k)}{D_k-\frac{n(k)(n(k)-1)}{2}\sqrt{1-\gamma}}\right)\\
&+ \frac{C_{in}}{ \ln(\sqrt{\tfrac{1}{1-\gamma}})}\ln\left(\frac{k!}{n(k)!}\right).
\end{align*}

\bibliographystyle{plain}
\bibliography{proxTradeoff}


\end{document}